\newcommand{\ignore}[1]{}
\def\Lone{$\ell_1$}
\def\epsilon{\varepsilon}
\def\zero{\mathbf{0}}
\def\cone{\textnormal{\texttt{cone}}}
\def\dist{\textnormal{\texttt{dist}}}
\def\reals{\mathbb{R}}
\def\X{\mathcal{X}}
\def\Y{\mathcal{Y}}
\def\A{\mathcal{A}}
\def\L{\mathcal{L}}
\def\lb{\boldsymbol{\ell}}
\def\xv{\mathbf{x}}
\def\x{\mathbf{x}}
\def\w{\mathbf{w}}
\def\uv{\mathbf{v}}
\def\wv{\mathbf{w}}
\def\H{\mathcal{H}}
\def\vv{\mathbf{v}}
\def\fv{\mathbf{f}}
\def\zv{\mathbf{z}}
\def\yv{\mathbf{y}}
\def\F{\mathcal{F}}
\def\K{\mathcal{K}}
\def\I{\mathbb{I}}
\newcommand{\regret}{\textnormal{Regret}}
\def\thb{\boldsymbol{\theta}}
\newtheorem{prop}{Proposition}
\newtheorem{lem}{Lemma}
\newtheorem{thm}{Theorem}
\newtheorem{defn}{Definition}
\newtheorem{claim}{Claim}
\title{Blackwell Approachability and Low-Regret Learning are Equivalent}
\author{
Jacob Abernethy\\
Computer Science Division\\
University of California, Berkeley\\
\texttt{\small jake@cs.berkeley.edu}
\and
Peter L. Bartlett\\
Computer Science Division and Department of Statistics\\
University of California, Berkeley\\
\texttt{\small bartlett@cs.berkeley.edu}
\and
Elad Hazan\\
Faculty of Industrial Engineering \& Management\\
Technion - Israel Institute of Technology\\
\texttt{\small ehazan@ie.technion.ac.il}
}
\date{}
\begin{document}

\maketitle

\begin{abstract}
	We consider the celebrated Blackwell Approachability Theorem for two-player games with vector payoffs. We show that Blackwell's result is equivalent, via  efficient reductions, to the existence of ``no-regret'' algorithms for Online Linear Optimization. Indeed, we show that any algorithm for one such problem can be efficiently converted into an algorithm for the other. We provide a useful application of this reduction: the first \emph{efficient} algorithm for calibrated forecasting.
\end{abstract}

\newpage

\section{Introduction}

A typical assumption in game theory, and indeed in most of economics, is that an agent's goal is to optimize a scalar-valued payoff function--a person's wealth, for example. Such scalar-valued utility functions are the basis for much work in learning and Statistics too, where one hopes to maximize prediction accuracy or minimize expected loss. Towards this end, a natural goal is to prove a guarantee on some algorithm's minimum expected payoff (or maximum reward).

In 1956, David Blackwell posed an intriguing question: what guarantee can we hope to achieve when playing a two-player game with a \emph{vector-valued payoff}, particularly when the opponent is potentially an adversary?
% Vector-payoff games may seem unusual, but we are quite familiar with the special case when this vector is a 1-dimensional scalar.
% We typically don't imagine payoffs as vector quantities; instead we imagine agents behaving according to scalar utilities. On the other hand, there are a range of examples where the outcome can not be measured by a single scalar quantit
% For the typical case of scalar-payoff games, we have an easy answer by way of Von Neumann's Minimax Theorem or, equivalently, strong-duality for LPs: A player's best randomized strategy will earn no more (and no less) than the best response to an adversary that announces his randomized strategy in advance.
% Simply put, a player can earn at least as much as she would against a worst-case adversary that even announced his randomized strategy. In other words, in certain situations our reward will be given not in one denomination but instead in a combination of currencies. These currencies may be exchangeable, but they may also not be: think euros and dollars (usually exchangeable) versus wealth and happiness (usually non-exchangeable).
For the case of scalar payoffs, as in a two-player zero-sum game, we already have a concise guarantee by way of Von Neumann's minimax theorem: either player has a fixed oblivious strategy that is effectively the ``best possible'', in that this player could do no better even with knowledge of the opponent's randomized strategy in advance. This result is equivalent to strong duality for linear programming.

When our payoffs are non-scalar quantities, it does not make sense to ask ``can we earn at least $x$?''. Instead, we would like to ask ``can we guarantee that our vector payoff lies in some convex set $S$''? In this case, the story is more difficult, and Blackwell observed that an oblivious strategy does not suffice---in short, we do not achieve ``duality'' for vector-payoff games. What Blackwell was able to prove is that this negative result applies only for \emph{one-shot games}. In his celebrated Approachability Theorem \cite{blackwell_analog_1956}, one can achieve a duality statment \emph{in the limit} when the game is played repeatedly, where the player may learn from his opponent's prior actions. Blackwell actually constructed an algorithm (that is, an adaptive strategy) with the guarantee that the average payoff vector ``approaches'' $S$, hence the name of the theorem.

Blackwell Approachability has the flavor of learning in repeated games, a topic which has received much interest. In particular, there are a wealth of recent results on so-called \emph{no-regret learning algorithms} for making repeated decisions given an arbitrary (and potentially adversarial) sequence of cost functions. The first no-regret algorithm for a ``discrete action'' setting was given in a seminal paper by James Hannan in 1956 \cite{hannan1957approximation}.  That same year, David Blackwell pointed out \cite{blackwell1954controlled} that his Approachability result leads, as a special case, to an algorithm with essentially the same low-regret guarantee proven by Hannan.

Blackwell thus found an intriguing connection between repeated vector-payoff games and low-regret learning, a connection that we shall explore in greater detail in the present work. Indeed, we will show that the relationship goes much deeper than Blackwell had originally supposed. We prove that, in fact, Blackwell's Approachability Theorem is equivalent, in a very strong sense, to no-regret learning, for the particular setting of so-called ``Online Linear Optimization''. Precisely, we show that any no-regret algorithm can be converted into an algorithm for Approachability and vice versa. This is algorithmic equivalence is achieved via the use of \emph{conic duality}: if our goal is low-regret learning in a cone $K$, we can convert this into a problem of approachability of the dual cone $K^0$, and vice versa.

This equivalence provides a range of benefits and one such is ``calibrated forecasting''. The goal of a calibrated forecaster is to ensure that sequential probability predictions of repeated events are
``unbiased'' in the following sense: when the weatherman says ``30\% chance of rain'', it should actually rain roughly three times out of ten. The problem of calibrated forecasting was reduced to Blackwell's Approachability Theorem by Foster~\cite{foster_proof_1999}, and a handful of other calibration techniques have been proposed, yet none have provided any efficiency guarantees on the strategy. Using a similar reduction from calibration to approachability, and by carefully constructing the reduction from approachability to online linear optimization, we achieve the first efficient calibration algorithm.

\paragraph{Related work}
There is by now vast literature on all three main topics of this paper: approachability, online learning and calibration, see \cite{cesa-bianchi_prediction_2006} for an excellent exposition.  The relation between the three areas is not as well-understood. 

Blackwell himself noted that approachability implies no regret algorithms in the discrete setting. However, as we show hereby, the full power of approachability extends to a much more general framework of online linear optimization, which has only recently been explored (see \cite{Hsurvey10} for a survey) and shown to give the first efficient algorithms for a host of problems (e.g. \cite{AbernethyHR08,even-dar_online_2009}). Perhaps more significant, we also prove the reverse direction - online linear optimization exactly captures the power of approachability. Previously, it was considered by many to be strictly stronger than regret minimization.

Calibration is a fundamental notion in prediction theory and has found numerous applications in economics and learning. Dawid \cite{dawid82} was the first to define calibration, with numerous algorithms later given by Foster and Vohra \cite{foster_asymptotic_1998}, Fudenberg and Levine \cite{fudenberg_easier_1999}, Hart and Mas-Colell \cite{hart_simple_2000} and more. Foster has given a calibration algorithm based on approachability \cite{foster_proof_1999}.
There are numerous definitions of calibration in the literature, mostly asymptotic. In this paper we give precise finite-time rates of calibration and show them to be optimal. Furthermore, we give the first {\it efficient} algorithm for calibration: attaining $\epsilon$-calibration (formally defined later) required a running time of $poly(\frac{1}{\epsilon})$ for all previous algorithms, whereas our algorithm runs in time proportional to $\log \frac{1}{\epsilon}$.

\section{Preliminaries}
\subsection{Blackwell Approachability} \label{sec:blackwell_approach}

A \emph{vector-valued game} is defined by a pair of convex compact sets $\X \subset \reals^{n}, \Y \subset \reals^{m}$ and a biaffine mapping $\lb: \X \times \Y \to \reals^{d}$; that is, for any $\alpha \in [0,1]$ and any $\xv_1,\xv_2 \in \X$, $\yv_1, \yv_2 \in \Y$, we have $\lb(\alpha \xv_{1} + (1-\alpha)\xv_{2}, \yv) = \alpha\lb(\xv_{1},\yv) + (1-\alpha)\lb(\xv_{2},\yv)$, and $\lb(\xv,\alpha\yv_{1} + (1-\alpha)\yv_{2}) = \alpha\lb(\xv,\yv_{1}) + (1-\alpha)\lb(\xv,\yv_{2})$. We consider $\lb(\xv,\yv)$ to be the ``payoff vector'' when Player 1 plays strategy $\xv$ and Player 2 plays strategy $\yv$. We consider this game from the perspective of Player 1, whom we will often refer to as ``the player'', while we refer to Player 2 as ``the adversary''.

In a scalar-valued game, the natural question to ask is ``how much can a player expect to gain/lose (in expectation) against a worst-case adversary?'' With $d$-dimensional payoffs, of course, we don't have a notion of `more' or `less', and hence this question does not make sense. As Blackwell pointed out \cite{blackwell_analog_1956}, the natural question to consider is ``can we guarantee that the payoff vector lies in a given (convex) set S?'' Notice that this formulation dovetails nicely with the original goal in scalar-valued game, in the following way. Take any \emph{halfspace} $H \subset \reals^{d}$, where $H$ is parameterized by a vector $\vv$ and a constant $c$, namely $H = \{\zv \in \reals^{d} : \zv \cdot \vv \geq c\}$. Then the question ``can we guarantee that $\lb(\cdot,\cdot)$ lies in H?'' is equivalent to ``can the player expect to gain at least $c$ in the scalar-valued game defined by $\ell'(\xv,\yv) := \lb(\xv,\yv)\cdot\vv$?''

Given that we would like to receive payoff vectors that lie within $S$, let us define three separate notions of achievement towards this goal.
\begin{defn}
	Let $S$ be a convex set of $\reals^{d}$.
	\begin{itemize}
		\item We say that a set $S$ is \emph{satisfiable} if there exists a strategy $\xv \in \X$ such that for any $\yv \in \Y$, $\lb(\xv,\yv) \in S$.
		\item We say that a set is $S$ \emph{halfspace-satisfiable} if, for any halfspace $H \supseteq S$, $H$ is satisfiable.
		\item We say that a set is $S$ is \emph{response-satisfiable} if, for any $\yv \in \Y$, there exists a $\xv_{\yv} \in \X$ such that $\lb(\xv_{\yv},\yv) \in S$.
	\end{itemize}
\end{defn}

Among these three conditions the first, satisfiability, is the strongest. Indeed, it says that the player has an \emph{oblivious} strategy which always provides the desired guarantee, namely that the payoff is in $S$. The second condition, response-satisfiability, is much weaker and says we can achieve the same guarantee provided we observe the opponent's strategy in advance. We will also make use of the final condition, halfspace-satisfiability, which is also a weak condition, although we shall show it is equivalent to response-satisifiability.

Of course, a scalar-valued game is a particular case of a vector-valued game. What is interesting is that, for this special case, the condition of satisfiability is in fact no stronger than response-satisfiability for the case when $S$ has the form $[c,\infty)$. Indeed, this fact can be view as the celebrated Minimax Theorem.
\begin{thm}[Von Neumann's Minimax Theorem \cite{von_neumann_theory_1947}]
	For $\X$ and $\Y$ the $n$-dimensional and $m$-dimensional probability simplexes, and with scalar-valued $\lb(\cdot,\cdot)$, the set $S = [c,\infty)$ is satisfiable if and only if it is response-satisfiable.
	\label{thm:minimax}
\end{thm}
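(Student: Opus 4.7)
The forward direction is immediate: any $\xv^\star$ witnessing satisfiability also witnesses response-satisfiability by taking $\xv_\yv=\xv^\star$ for every $\yv\in\Y$, so the entire content lies in the converse.

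For the converse, my plan is a separating hyperplane argument. Write the biaffine payoff as $\lb(\xv,\yv)=\xv^\top A\yv$ with $A_{ij}=\lb(\mathbf{e}_i,\mathbf{e}_j)$, and consider the image set $B=\{A^\top\xv : \xv\in\X\}\subset\reals^m$. This set is convex (image of a convex set under a linear map) and compact (image of a compact set under a continuous map). By biaffinity, $\lb(\xv,\yv)=\sum_j y_j(A^\top\xv)_j$ for $\yv\in\Y$, so $S=[c,\infty)$ is satisfiable exactly when some $\mathbf{b}\in B$ has $b_j\ge c$ for all $j$; equivalently, when $B$ meets the closed convex set $C:=\{\zv\in\reals^m : z_j\ge c \text{ for all }j\}$.

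Assume for contradiction that $B\cap C=\emptyset$. Since $B$ is compact and $C$ is closed convex, the strict separating hyperplane theorem yields $\vv\in\reals^m$ and $\alpha\in\reals$ with $\vv^\top\mathbf{b}<\alpha<\vv^\top\zv$ for all $\mathbf{b}\in B$ and $\zv\in C$. Because $C$ is unbounded above in every coordinate, each $v_j$ must be nonnegative, for otherwise $\inf_{\zv\in C}\vv^\top\zv=-\infty$; and $\vv\neq\zero$ since the separation is strict. Rescale so $\sum_j v_j=1$, placing $\vv$ in the simplex $\Y$; then $\inf_{\zv\in C}\vv^\top\zv=c$, and the separation gives $\lb(\xv,\vv)=(A^\top\xv)^\top\vv<c$ for every $\xv\in\X$. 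This directly contradicts response-satisfiability at $\yv=\vv$, which would require an $\xv_\vv\in\X$ with $\lb(\xv_\vv,\vv)\ge c$.

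The main obstacle I expect is the normalization step: one has to extract from the separating functional $\vv$ a bona fide mixed strategy for Player~2, which requires nonnegativity and nontriviality of its coordinates. Both are forced by the very specific geometry of $C$ as a translate of the nonnegative orthant, so the argument goes through, but care is required here because it is precisely where the structure of the simplex (and the scalar nature of the payoff) enters, and it is the step without an obvious analog in the general vector-valued setting that the rest of the paper studies.
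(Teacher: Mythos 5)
Your proof is correct. Note first that the paper does not actually supply a proof of Theorem~\ref{thm:minimax}: it is stated as a classical result with a citation, so there is no in-text argument to compare against. Your argument---reduce satisfiability to nonemptiness of $B\cap C$, strictly separate when this fails, and read the separating functional off as a mixed strategy for Player~2---is the standard separating-hyperplane proof of von Neumann's theorem, and each step checks out: $B$ is compact convex as the linear image of the simplex, $C$ is closed convex, nonnegativity of $\vv$ is forced by unboundedness of $C$ in each coordinate, $\vv\neq\zero$ by strictness, the rescaling is harmless, and $\inf_{\zv\in C}\vv^\top\zv=c$ gives the contradiction with response-satisfiability at $\yv=\vv$.

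Where your route differs from the paper's machinery: the paper, when it needs a minimax step (the $\Longrightarrow$ direction of Lemma~\ref{lem:halfresp}), invokes Sion's theorem as a black box rather than proving anything from first principles. In that framework there are two shorter derivations available to you. One is to apply Sion directly: response-satisfiability is $\sup_{\yv}\inf_{\xv}(-\lb)\le -c$, satisfiability is $\inf_{\xv}\sup_{\yv}(-\lb)\le -c$, and Sion equates the two. The other is to observe that $S=[c,\infty)$ is itself a halfspace in $\reals^1$, so Theorem~\ref{thm:minimax} is literally the $d=1$ case of Lemma~\ref{lem:halfresp} (response-satisfiable $\Rightarrow$ halfspace-satisfiable, applied to the halfspace $H=S$). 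Your separation argument buys self-containment and elementarity---it avoids importing Sion---and it mirrors the geometric idea the paper uses in the \emph{other} ($\Longleftarrow$) direction of Lemma~\ref{lem:halfresp}. The trade-off is that it is specific to the scalar/orthant geometry, as you yourself flag in your closing remark, whereas the Sion route is the one that scales to the rest of the paper's development.

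One cosmetic point: the strict separation theorem between a compact convex set and a disjoint closed convex set more precisely gives $\sup_{\mathbf{b}\in B}\vv^\top\mathbf{b}<\inf_{\zv\in C}\vv^\top\zv$; your displayed chain $\vv^\top\mathbf{b}<\alpha<\vv^\top\zv$ with a single $\alpha$ and strictness on both sides is a slight overstatement on the closed-but-noncompact side, though it does not affect the argument since all you use downstream is $\sup_B\vv^\top\mathbf{b}<c$.
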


We will also make use of a more general version of the Minimax Theorem, due to Maurice Sion.

\begin{thm}[Sion, 1958 \cite{sion1958general}] \label{thm:sion}
	Given convex compact sets $\X \subset \reals^{n}, \Y \subset \reals^{m}$, and a function $f : \X \times \Y \to \reals$ convex and concave in its first and second arguments respectively, we have
	\[
		\inf_{\xv \in \X}\sup_{\yv \in \Y} f(\xv,\yv) =\sup_{\yv \in \Y} \inf_{\xv \in \X} f(\xv,\yv)
	\]
\end{thm}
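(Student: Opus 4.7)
The plan is to establish the easy direction $\sup_\yv \inf_\xv f \le \inf_\xv \sup_\yv f$ by noting that for any $\bar\xv \in \X$ and $\bar\yv \in \Y$ we have $\inf_\xv f(\xv,\bar\yv) \le f(\bar\xv,\bar\yv) \le \sup_\yv f(\bar\xv,\yv)$; taking $\sup_{\bar\yv}$ on the left and $\inf_{\bar\xv}$ on the right then gives the inequality. All the substance is in the reverse direction, which I would prove by contradiction, reducing the convex/concave situation on $\X \times \Y$ to a finite bilinear game handled by the Von Neumann minimax theorem (Theorem \ref{thm:minimax}).

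Suppose, for contradiction, that there exists $c \in \reals$ with $\sup_\yv \inf_\xv f(\xv,\yv) < c < \inf_\xv \sup_\yv f(\xv,\yv)$. For each $\yv \in \Y$, the sublevel set $A_\yv := \{\xv \in \X : f(\xv,\yv) \le c\}$ is convex (by convexity of $f(\cdot,\yv)$) and closed (a finite convex function on a compact convex subset of $\reals^n$ is continuous, or at least lower semicontinuous at the boundary, so the sublevel set is closed); symmetrically, $B_\xv := \{\yv \in \Y : f(\xv,\yv) \ge c\}$ is convex and closed. The hypothesis $\inf_\xv \sup_\yv f(\xv,\yv) > c$ means no single $\xv$ can satisfy $f(\xv,\yv) \le c$ for every $\yv$, so $\bigcap_{\yv \in \Y} A_\yv = \emptyset$; symmetrically $\bigcap_{\xv \in \X} B_\xv = \emptyset$. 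By compactness of $\X, \Y$ and the finite intersection property, there exist finite subfamilies $\yv_1,\ldots,\yv_n \in \Y$ and $\xv_1,\ldots,\xv_m \in \X$ with $\bigcap_i A_{\yv_i} = \emptyset$ and $\bigcap_j B_{\xv_j} = \emptyset$.

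Now I would apply Theorem \ref{thm:minimax} to the finite scalar game on $\Delta_m \times \Delta_n$ with matrix $M_{ji} := f(\xv_j, \yv_i)$. This produces optimal mixed strategies $q^* \in \Delta_m, p^* \in \Delta_n$ and a common value $v$, characterised by $\sum_j q^*_j f(\xv_j,\yv_i) \le v$ for every $i$ and $\sum_i p^*_i f(\xv_j,\yv_i) \ge v$ for every $j$. Set $\bar\xv := \sum_j q^*_j \xv_j \in \X$ and $\bar\yv := \sum_i p^*_i \yv_i \in \Y$ using convexity of the domains. Convexity of $f$ in its first argument gives $f(\bar\xv,\yv_i) \le \sum_j q^*_j f(\xv_j,\yv_i) \le v$ for every $i$, hence $\max_i f(\bar\xv,\yv_i) \le v$; but $\bigcap_i A_{\yv_i} = \emptyset$ forces $\max_i f(\bar\xv,\yv_i) > c$, so $v > c$. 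Concavity of $f$ in its second argument gives $f(\xv_j,\bar\yv) \ge \sum_i p^*_i f(\xv_j,\yv_i) \ge v$ for every $j$, hence $\min_j f(\xv_j,\bar\yv) \ge v$; but $\bigcap_j B_{\xv_j} = \emptyset$ forces $\min_j f(\xv_j,\bar\yv) < c$, so $v < c$, a contradiction.

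The subtlest step will be justifying closedness of the sets $A_\yv$ and $B_\xv$: a finite-valued convex function on a compact convex subset of $\reals^n$ is continuous on the relative interior but can in principle be discontinuous at the boundary, so one should either supplement the convex/concave hypothesis with lower/upper semicontinuity in each argument separately (the standard formulation of Sion's theorem) or verify closedness directly. Once this is in hand, the rest of the argument is a clean double application of compactness followed by the finite-dimensional minimax theorem.
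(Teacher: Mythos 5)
The paper does not prove this statement; it is quoted as a known result with a citation to Sion's 1958 paper, so there is no in-paper proof to compare against. Your proof is correct as far as it goes, and the strategy --- derive $\bigcap_{\yv} A_{\yv} = \emptyset$ and $\bigcap_{\xv} B_{\xv} = \emptyset$ from the assumed strict gap, extract finite subfamilies by compactness, run Von Neumann's minimax theorem on the resulting $m \times n$ matrix game, and then use convexity (resp.\ concavity) of $f$ to push the mixed-strategy guarantees back to the barycenters $\bar\xv$ and $\bar\yv$, squeezing $v$ simultaneously above and below $c$ --- is a standard and clean way to reduce Sion to the bilinear case. Your application of Theorem~\ref{thm:minimax} is correct: the saddle-point pair $(q^*, p^*)$ with common value $v$ is exactly what the finite minimax theorem supplies, and the two inequalities $\max_i f(\bar\xv,\yv_i) \le v$ and $\min_j f(\xv_j,\bar\yv) \ge v$ follow from convexity/concavity respectively, while emptiness of the finite intersections forces $\max_i f(\bar\xv,\yv_i) > c$ and $\min_j f(\xv_j,\bar\yv) < c$, giving the contradiction.

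You are right to flag the closedness of $A_{\yv}$ and $B_{\xv}$ as the one genuine gap relative to the hypotheses as literally stated. The paper's statement of Sion's theorem omits any semicontinuity assumption, but without lower semicontinuity of $f(\cdot,\yv)$ and upper semicontinuity of $f(\xv,\cdot)$ the sublevel/superlevel sets need not be closed and the finite-intersection-property step fails. The canonical statement of Sion's theorem carries exactly these semicontinuity conditions (and only quasi-convexity/quasi-concavity, not full convexity, is needed), and the paper is implicitly assuming continuity; so for the uses made of the theorem in the paper your proof is adequate, but a self-contained treatment should add the semicontinuity hypotheses explicitly. Worth noting: Sion's own proof goes through the KKM/Helly route and handles quasi-convex/quasi-concave $f$ directly, whereas your Von Neumann reduction genuinely uses full convexity/concavity in the step that transfers the mixed-strategy bounds to $\bar\xv, \bar\yv$; that is a real, if benign, loss of generality, and it is worth being aware that the two arguments are not interchangeable.
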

One might hope that the analog of Theorem~\ref{thm:minimax} for vector-valued games would also hold true. Unfortunately, this is not the case. Consider the following easy example: $\X = \Y := [0,1]$, the payoff is simply $\lb(x,y) := (x,y)$ for $x,y \in [0,1]$, and the set in question is $S := \{ (z,z) \; \forall z\in[0,1]\}$. Response-satisfiability is easy to establish, simply use the response strategy $x_{y} = y$. But satisfiability can not be achieved: there is certainly no generic $x$ for which $(x,y) \in S$ for all $y$.

At first glance, it seems unfortunate that we can not achieve a similar notion of duality for games with vector-valued payoffs. What Blackwell showed, however, is that the story is not quite so bad: we \emph{can} obtain a version of Theorem~\ref{thm:minimax} for a weaker notion of satisfiability. In particular, Blackwell proved that, so long as we can play this game repeatedly, then there exists an adaptive algorithm for playing this game that guarantees satisfiability for the \emph{average} payoff vector in the limit. Blackwell coined the term \emph{approachability}.
\begin{defn}
	Consider a vector-valued game $\lb(\cdot,\cdot)$ and a convex set $S$. Imagine we have some ``learning'' algorithm $\A$ which, given a sequence $\yv_{1}, \yv_2, \ldots \in \Y$, produces a sequence $\xv_1,\xv_2,...$ via the rule $\xv_{t} \leftarrow \A(\yv_{1}, \yv_{2}, \ldots, \yv_{t-1})$.  For any $T$ define\footnote{We may simply write $D_T(\A)$ when $S$ and $\yv_1, \ldots, \yv_T$ are clear from context.} the {\it distance} of $\A$ to be
	\[
		D_T(\A;S, \yv_1, \ldots, \yv_T) \equiv \dist\left(\frac 1 T \sum_{t=1}^T \ell(\xv_t, \yv_t), S \right),
	\]
	where here $\dist(,)$ mean the usual notion $\ell_2$-distance between a point and a set.
	For a given vector-valued game and a convex set $S$, we say that $S$ is \emph{approachable} if there exists a learning algorithm $\A$ such that,
	\[
		\mathop{\lim \sup}_{T \to \infty} D_T(\A;S, \yv_1, \ldots, \yv_T) = 0 \quad \quad \textnormal{for any sequence }\yv_{1}, \yv_2, \ldots \in \Y
	\]
\end{defn}

Approachability is a curious property: it allows the player to repeat the game and learn from his opponent, and only requires that the average payoff satisfy the desired guarantee in the long run. Blackwell showed that response-satisfiability, which does not imply satisfiability, does imply approachability.

\begin{thm}[Blackwell's Approachability Theorem \cite{blackwell_analog_1956}] \label{thm:blackwell}
	Any closed convex set $S$ is approachable if and only if it is response-satisfiable.
\end{thm}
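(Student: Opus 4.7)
The plan is to prove the two directions separately, with the interesting direction routed through an intermediate equivalence with halfspace-satisfiability. Throughout, I write $\bar{\lb}_t = \frac{1}{t}\sum_{s=1}^t \lb(\xv_s,\yv_s)$ for the average payoff, and I set $M := \sup_{\xv,\yv}\|\lb(\xv,\yv)\|$ (finite by biaffinity and compactness of $\X\times\Y$).

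\textbf{Easy direction: approachable $\Rightarrow$ response-satisfiable.} Fix $\yv\in\Y$ and let the adversary play $\yv_t = \yv$ for all $t$. By biaffinity, $\bar{\lb}_T = \lb(\bar{\xv}_T,\yv)$ with $\bar{\xv}_T := \frac{1}{T}\sum_{t=1}^T \xv_t \in \X$. Approachability gives $\dist(\lb(\bar{\xv}_T,\yv),S)\to 0$; by compactness of $\X$ pass to a convergent subsequence $\bar{\xv}_{T_k}\to \xv^*\in\X$, and by continuity of $\lb(\cdot,\yv)$ and closedness of $S$ conclude $\lb(\xv^*,\yv)\in S$. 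So $S$ is response-satisfiable.

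\textbf{Intermediate step: response-satisfiable $\iff$ halfspace-satisfiable.} For $(\Rightarrow)$, fix a halfspace $H=\{\zv:\vv\cdot\zv\ge c\}\supseteq S$. Response-satisfiability gives, for every $\yv$, some $\xv_\yv$ with $\vv\cdot\lb(\xv_\yv,\yv)\ge c$, i.e.\ $\inf_\yv \sup_\xv \vv\cdot\lb(\xv,\yv)\ge c$. The function $(\xv,\yv)\mapsto \vv\cdot\lb(\xv,\yv)$ is biaffine, hence convex-concave, so Sion's Theorem~\ref{thm:sion} yields $\sup_\xv \inf_\yv \vv\cdot\lb(\xv,\yv)\ge c$; by compactness the supremum is attained at some $\xv^*$, which satisfies $H$ uniformly in $\yv$. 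For $(\Leftarrow)$, suppose response-satisfiability fails at some $\yv$: then the compact convex set $K_\yv := \{\lb(\xv,\yv):\xv\in\X\}$ is disjoint from the closed convex $S$, so the separating hyperplane theorem produces a halfspace $H\supseteq S$ avoiding $K_\yv$ entirely, contradicting halfspace-satisfiability.

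\textbf{Main direction: halfspace-satisfiable $\Rightarrow$ approachable.} I construct Blackwell's geometric algorithm. At round $t+1$, if $\bar{\lb}_t\in S$ play arbitrarily; otherwise let $\pv_t := \Pi_S(\bar{\lb}_t)$ be the Euclidean projection, let $\vv_t := \bar{\lb}_t - \pv_t$, and let $H_t := \{\zv : \vv_t\cdot \zv \le \vv_t\cdot \pv_t\}$. The projection characterization shows $S\subseteq H_t$, so halfspace-satisfiability yields $\xv_{t+1}\in \X$ with $\lb(\xv_{t+1},\yv)\in H_t$ for every $\yv\in\Y$; I play this $\xv_{t+1}$. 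Writing $\ell_{t+1}:=\lb(\xv_{t+1},\yv_{t+1})$ and $d_t := \dist(\bar{\lb}_t,S)^2 = \|\vv_t\|^2$, the update $\bar{\lb}_{t+1} = \tfrac{t}{t+1}\bar{\lb}_t + \tfrac{1}{t+1}\ell_{t+1}$ gives
\[
\|\bar{\lb}_{t+1}-\pv_t\|^2 = \Bigl(\tfrac{t}{t+1}\Bigr)^{\!2}\|\vv_t\|^2 + \tfrac{2t}{(t+1)^2}\,\vv_t\!\cdot\!(\ell_{t+1}-\pv_t) + \tfrac{1}{(t+1)^2}\|\ell_{t+1}-\pv_t\|^2.
\]
The cross term is $\le 0$ because $\ell_{t+1}\in H_t$, and the last term is bounded by $(2M)^2/(t+1)^2$. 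Since $\pv_t\in S$, we get $d_{t+1}\le \|\bar{\lb}_{t+1}-\pv_t\|^2 \le \bigl(\tfrac{t}{t+1}\bigr)^{\!2} d_t + \tfrac{4M^2}{(t+1)^2}$. Multiplying by $(t+1)^2$ and telescoping shows $t^2 d_t = O(t)$, hence $D_T(\A) = \sqrt{d_T} = O(1/\sqrt{T})\to 0$.

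\textbf{Main obstacle.} The two conceptual steps are the use of Sion's theorem to convert response-satisfiability into the stronger-looking halfspace-satisfiability, and the geometric observation that projecting onto $S$ always produces a halfspace satisfiable by the player. Once these are in place, the one-line potential argument on $d_t$ is the only calculation, and its non-positive cross term is exactly what the halfspace choice of $\xv_{t+1}$ was engineered to provide.
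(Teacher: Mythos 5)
Your proof is correct, and it decomposes the statement in essentially the same way as the paper does conceptually (easy direction; response-satisfiable $\iff$ halfspace-satisfiable; hard direction from halfspace-satisfiability), but it proves the hard direction by a genuinely different route. The paper itself cites Theorem~\ref{thm:blackwell} rather than proving it, and supplies Lemma~\ref{lem:halfresp} (whose Sion-plus-separation argument your intermediate step reproduces nearly verbatim, modulo a sign convention on halfspaces). For ``halfspace-satisfiable $\Rightarrow$ approachable,'' however, the machinery the paper actually builds is Algorithm~\ref{alg:lra_to_bwa} together with Proposition~\ref{claim:regret2approachability}: it runs a no-regret Online Linear Optimization algorithm over $\K = \cone(S)^0 \cap B_1$, invokes the halfspace oracle on the OLO iterates $\thb_t$, and bounds the distance to $S$ by the average regret via Lemmas~\ref{lem:distance-cone} and~\ref{lem:distance-set}. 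You instead reconstruct Blackwell's original geometric argument --- project the running average onto $S$, satisfy the separating halfspace, and telescope the squared-distance potential. Both give the $O(1/\sqrt{T})$ rate. Your route is self-contained and elementary; the paper's route is modular (it offloads the analysis onto the existence of no-regret OLO algorithms) and is in fact the point of the paper, since the authors are arguing that approachability and OLO are interchangeable. One minor technicality in your potential argument: the bound $\|\ell_{t+1}-\pv_t\|\le 2M$ silently assumes $\|\pv_t\|\le M$, which is not automatic for an arbitrary closed convex $S$; it is repaired by noting that response-satisfiability forces $S$ to meet the feasible payoff set (so all $\pv_t$ lie in a fixed ball of radius $O(M)$), or by replacing $S$ with its intersection with the feasible convex hull, which changes nothing. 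The constant changes but the $O(1/\sqrt{T})$ conclusion stands.
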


This version of the theorem, which appears in Evan-Dar et al. \cite{even-dar_online_2009}, is not the one usually attributed to Blackwell, although this is essentially one of his corollaries. His main theorem states that halfspace-satisfiability, rather than response-satisfiability, implies satisfiability. However, these two weaker satisfiability conditions are equivalent:
\begin{lem} \label{lem:halfresp}
	Given a biaffine function $\lb(\cdot,\cdot)$ and any closed convex set $S$, $S$ is response-satisfiable if and only if it is halfspace-satisfiable.
\end{lem}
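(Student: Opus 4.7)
The plan is to prove the two directions separately, with the forward direction using a minimax argument and the reverse direction using a separation argument.

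For halfspace-satisfiability $\Rightarrow$ response-satisfiability, I would argue the contrapositive. Suppose $S$ is not response-satisfiable, so there exists $\yv^* \in \Y$ such that $\lb(\xv, \yv^*) \notin S$ for every $\xv \in \X$. The image $R := \{\lb(\xv, \yv^*) : \xv \in \X\}$ is the image of the convex compact set $\X$ under the affine map $\xv \mapsto \lb(\xv, \yv^*)$ (affinity follows from the biaffinity of $\lb$), hence $R$ is itself convex and compact. Since $R$ is compact, $S$ is closed, and the two are disjoint and convex, the standard strict separating hyperplane theorem yields a vector $\vv$ and a constant $c$ with $\vv \cdot \zv \geq c$ for all $\zv \in S$ and $\vv \cdot \zv < c$ for all $\zv \in R$. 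Setting $H := \{\zv : \vv \cdot \zv \geq c\}$, we have $S \subseteq H$, yet no $\xv \in \X$ satisfies $\lb(\xv, \yv^*) \in H$, so $H$ is not satisfiable. This contradicts halfspace-satisfiability.

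For response-satisfiability $\Rightarrow$ halfspace-satisfiability, fix an arbitrary halfspace $H = \{\zv : \vv \cdot \zv \geq c\}$ with $S \subseteq H$, and define the scalar function $f(\xv, \yv) := \vv \cdot \lb(\xv, \yv)$. Because $\lb$ is biaffine, $f$ is affine in each argument separately and hence trivially convex in $\xv$ and concave in $\yv$ (and vice versa). Response-satisfiability supplies, for each $\yv$, an $\xv_\yv$ with $\lb(\xv_\yv, \yv) \in S \subseteq H$, so $\sup_\xv f(\xv, \yv) \geq f(\xv_\yv, \yv) \geq c$, giving $\inf_\yv \sup_\xv f(\xv, \yv) \geq c$. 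Now apply Sion's Theorem (Theorem~\ref{thm:sion}) with the roles of the two variables swapped (which is legitimate because $f$ is affine in both): we obtain $\sup_\xv \inf_\yv f(\xv, \yv) = \inf_\yv \sup_\xv f(\xv, \yv) \geq c$. Because $\X$ is compact and $f$ continuous, the outer supremum is attained at some $\xv^\star$, and then $\vv \cdot \lb(\xv^\star, \yv) \geq c$ for every $\yv \in \Y$, i.e., $\lb(\xv^\star, \yv) \in H$. Thus $H$ is satisfiable.

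The main subtle point is the minimax step: $\lb$ is only biaffine, not generally sign-definite in any variable, so one has to be careful that both $f$ and $-f$ qualify for Sion's theorem. Fortunately affinity in each argument gives convex-concavity in either ordering, which is exactly what allows us to swap from the ``$\inf_\yv \sup_\xv$'' form delivered by response-satisfiability into the ``$\sup_\xv \inf_\yv$'' form needed to exhibit an oblivious strategy for $H$. The separation direction is standard once one notices that $R$ is compact, which is what lets us use strict rather than merely weak separation.
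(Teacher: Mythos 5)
Your proof is correct and takes essentially the same route as the paper: the response-satisfiable $\Rightarrow$ halfspace-satisfiable direction via Sion's minimax theorem plus compactness of $\X$, and the converse (contrapositive) via a separating-hyperplane argument on the image $\{\lb(\xv,\yv^*) : \xv \in \X\}$. The only cosmetic differences are your choice of the $\geq$ halfspace normalization (which forces the role-swap in Sion's theorem, harmless since $f$ is biaffine) versus the paper's $\leq$ convention (which applies Sion directly), and your explicit note that the image set is compact to justify strict separation—a small precision the paper elides.
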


\begin{proof} %[Proof of Lemma~\ref{lem:halfresp}]
	We will show each direction separately.
	\begin{itemize}
		\item[$\Longrightarrow$] Assume that $S$ is response-satisfiable. Hence, for any $\yv$ there is an $\xv_\yv$ such that $\lb(\xv_\yv,\yv) \in S$. Now take any halfspace $H \supset S$ parameterized by $\thb,c$, that is $H = \{ \zv : \langle \thb, \zv \rangle \leq c\}$. Then let us define a scalar-valued game with payoff function
		\[
			f(\xv,\yv) = \langle \thb, \lb(\xv,\yv) \rangle.
		\]
		Notice that $H \supset S$ implies that $\thb \cdot \zv \leq c$ for all $\zv \in S$. Combining this with the definition of response-satisfiability, we see that
		\[
			\sup_{\yv \in \Y} \inf_{\xv \in \X} f(\xv,\yv) \leq \sup_{\yv \in \Y} f(\xv_\yv,\yv)
			\leq c.
		\]
		By Sion's Theorem (Theorem \ref{thm:sion}), it follows that $\inf_{\xv \in \X} \sup_{\yv \in \Y} f(\xv,\yv) \leq c$. By compactness of $\X$, we can choose the minimizer $\xv^*$ of this optimization. Notice that, for any $\yv \in \Y$, we have that $f(\xv^*,\yv) \leq c$ by construction, and hence $\lb(\xv^*,\yv) \in H$. $H$ is thus satisfiable, as desired.
		\item[$\Longleftarrow$] Assume that $S$ is not response-satisfiable. Hence, there must exists some $\yv_0$ such that $\lb(\xv,\yv_0) \notin S$ for every $\xv$. Consider the set $U := \{ \lb(\xv,\yv_0) \text{ for all }\xv \in X\}$ and notice that $U$ is convex since $\X$ is convex and $\lb(\cdot,\yv_0)$ is affine. Furthermore, because $S$ is convex and $S \cap U = \emptyset$ by assumption, there must exist some halfspace $H$ dividing the two, that is $S \subset H$ and $H \cap U = \emptyset$. By construction, we see that for any $\xv$, $\lb(\xv,\yv_0) \notin H$ and hence $H$ is not satisfiable. It follows immediately that $S$ is not halfspace-satisfiable.
	\end{itemize}
\end{proof}
For the sake of simplicity, and for natural connection to the Minimax Theorem, we prefer Theorem~\ref{thm:blackwell}. However, for certain results, it will be preferable to appeal to the halfspace-satisfiablility condition instead.

\subsection{Online Linear Optimization}

In the setting Online Linear Optimization, the ``learner'' makes decisions from a bounded convex decision set $\K$ in some Hilbert space. On each of a sequence of rounds, the decision maker chooses a point $\xv_t\in \K$, and is then given a linear cost function $\fv_t \in \F$, where $\F$ is some bounded set of cost functions, and cost $\langle \fv_t, \xv_t \rangle$ is paid. The standard measure of performance in this setting, called regret, is defined as follows.
\begin{defn}
	The \emph{regret} of learning algorithm $\L$ is
	\[
		\text{Regret}_T(\L) = \max_{\fv_1, \ldots, \fv_T \in \F} \; \left[\sum_{t=1}^T \langle \fv_t, \xv_t \rangle - \min_{\xv \in K} \sum_{t=1}^T \langle \fv_t, \xv \rangle\right]
	\]
We say that $\L$ is a \emph{no-regret} learning algorithm when it holds that $ \textnormal{Regret}_T(\L) = O(\sqrt{T}) = o(T)$.
\end{defn}
We state a well-known result:
\begin{thm}
For any bounded decision set $\K \subset \H$, there exists a no-regret algorithm on $\K$.
\end{thm}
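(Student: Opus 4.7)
The plan is to exhibit the Online Gradient Descent algorithm of Zinkevich and bound its regret by $O(\sqrt{T})$. Fix a step size $\eta > 0$ and an arbitrary $\xv_1 \in \K$. After observing the cost $\fv_t$ at round $t$, the algorithm updates
\[
\xv_{t+1} = \Pi_\K\!\left(\xv_t - \eta \fv_t\right),
\]
where $\Pi_\K$ denotes the Euclidean projection onto $\K$. This projection is well-defined and nonexpansive because $\K$ is a closed bounded convex subset of a Hilbert space (if $\K$ is not assumed closed, replace it with its closure, which has the same diameter and the same infimum of any linear cost).

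For the analysis, fix any comparator $\xv^* \in \K$. Nonexpansiveness of $\Pi_\K$ applied to $\zv = \xv_t - \eta \fv_t$ (whose projection onto $\K$ is $\xv_{t+1}$, while $\xv^* \in \K$ is already its own projection) yields
\[
\|\xv_{t+1} - \xv^*\|^2 \le \|\xv_t - \eta\fv_t - \xv^*\|^2 = \|\xv_t - \xv^*\|^2 - 2\eta \langle \fv_t, \xv_t - \xv^*\rangle + \eta^2 \|\fv_t\|^2.
\]
Rearranging each inequality and summing for $t=1,\ldots,T$ telescopes the quadratic terms:
\[
\sum_{t=1}^T \langle \fv_t, \xv_t - \xv^*\rangle \;\le\; \frac{\|\xv_1 - \xv^*\|^2}{2\eta} + \frac{\eta}{2}\sum_{t=1}^T \|\fv_t\|^2 \;\le\; \frac{D^2}{2\eta} + \frac{\eta G^2 T}{2},
\]
where $D := \sup_{\xv,\xv'\in\K}\|\xv - \xv'\| < \infty$ by boundedness of $\K$ and $G := \sup_{\fv\in\F}\|\fv\| < \infty$ by boundedness of $\F$. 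Taking the supremum over $\xv^* \in \K$ and over the sequence $\fv_1,\ldots,\fv_T$ and choosing $\eta = D/(G\sqrt{T})$ gives $\regret_T(\L) \le DG\sqrt{T} = O(\sqrt{T})$.

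There is essentially no real obstacle: the only nontrivial fact is that the Hilbert-space projection onto a closed convex set is nonexpansive, which is immediate from the first-order variational inequality $\langle \zv - \Pi_\K(\zv),\, \xv^* - \Pi_\K(\zv)\rangle \le 0$ for all $\xv^* \in \K$. If one prefers an algorithm with a horizon-free guarantee, the step size can be replaced by the time-decreasing schedule $\eta_t = D/(G\sqrt{t})$, and the same telescoping argument (together with $\sum_{t=1}^T 1/\sqrt{t} \le 2\sqrt{T}$) produces an anytime no-regret bound of the same order.
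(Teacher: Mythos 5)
Your proof is correct and is exactly the argument the paper has in mind: the paper does not prove this theorem itself (it states it as a well-known result) but explicitly invokes Zinkevich's Online Gradient Descent, which is precisely what you analyze, arriving at the same $DG\sqrt{T}$ bound the paper later quotes in its calibration analysis.
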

Later in this paper, we shall use the Gradient Descent algorithm of Zinkevich~\cite{zinkevich_online_2003}. Ultimately, our goal will be to show that this theorem is equivalent to Theorem~\ref{thm:blackwell}.

\subsection{Convex cones in Hilbert space}

\begin{defn}
	A set $X \subset \reals^d$ is a \emph{cone} if it is closed under addition and multiplication by nonnegative scalars.
	Given any set $K \subset \H$, define $\cone(K) := \{ \alpha \xv : \alpha \in \reals_+, \xv \in K \}$,
	which is a cone in $\H$. Also, given any set in Hilbert space $C \subset \H$, we can define the \emph{polar cone} of $C$ as
	\[
		C^0 := \{ \thb \in \H : \langle \thb, \xv \rangle \leq 0 \text{ for all } \xv \in C \}
	\]
\end{defn}

We state a few simple facts on convex sets:
\begin{lem}\label{lem:tools}
	If $C$ is a convex cone then (1) $(C^0)^0 = C$ and (2) supporting hyperplanes in $C^0$ correspond to points $\xv \in C$, and vice versa. That is, given any supporting hyperplane $H$ of $C^0$, $H$ can be written exactly as $\{ \thb \in \reals^d : \langle \thb, \xv \rangle = 0 \}$ for some vector $\xv \in C$ that is unique up to scaling.
\end{lem}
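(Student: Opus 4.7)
Part (1) is the classical Bipolar Theorem, and part (2) will follow almost mechanically from it together with the observation that supporting hyperplanes of a cone pass through the origin. I will assume throughout that $C$ is a closed convex cone (which must be the intended hypothesis, since the bipolar identity fails otherwise).

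For part (1), the inclusion $C \subseteq (C^0)^0$ is immediate from the definitions: if $\xv \in C$ and $\thb \in C^0$, then $\langle \thb, \xv \rangle \le 0$, so $\xv$ meets the defining condition of $(C^0)^0$. For the reverse inclusion I would argue by contrapositive: take any $\xv \notin C$ and apply the separating hyperplane theorem (in the Hilbert space $\H$) to the point $\xv$ and the closed convex set $C$. This yields some $\thb \in \H$ and constant $c \in \reals$ with $\langle \thb, \yv \rangle \le c < \langle \thb, \xv \rangle$ for all $\yv \in C$. Because $C$ is a cone containing $\zero$, we have $c \ge 0$, and moreover $\alpha \yv \in C$ for every $\alpha \ge 0$ and $\yv \in C$, which forces $\langle \thb, \yv \rangle \le 0$ (otherwise scaling blows the inner product past $c$). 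Hence $\thb \in C^0$, yet $\langle \thb, \xv \rangle > 0$, so $\xv \notin (C^0)^0$. The main subtlety here is invoking separation correctly and leveraging the cone structure to push the separating constant down to zero.

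For part (2), I would first establish that every supporting hyperplane of $C^0$ passes through the origin. If $H = \{\thb : \langle \thb, \xv \rangle = c\}$ supports $C^0$ with $\langle \thb, \xv \rangle \le c$ for all $\thb \in C^0$, then plugging in $\thb = \zero \in C^0$ gives $c \ge 0$, and scaling $\thb \in C^0$ by arbitrary $\alpha \ge 0$ (using that $C^0$ is also a cone) forces $\langle \thb, \xv \rangle \le 0$, so we may take $c = 0$. Thus $H = \{\thb : \langle \thb, \xv \rangle = 0\}$, and the supporting condition $\langle \thb, \xv \rangle \le 0$ for all $\thb \in C^0$ is exactly the statement $\xv \in (C^0)^0$, which equals $C$ by part (1). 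Conversely, for any nonzero $\xv \in C$, the hyperplane $\{\thb : \langle \thb, \xv \rangle = 0\}$ supports $C^0$ by the very definition of the polar cone. Finally, uniqueness up to scaling is standard: two linear functionals $\xv, \xv'$ cut out the same hyperplane through the origin iff they are scalar multiples, and the sign of the scalar is pinned down by requiring $C^0$ to lie on the chosen side.

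I expect no real obstacles; the only care needed is applying the right separation theorem in $\H$ in part (1) and handling the cone-scaling argument cleanly in both parts.
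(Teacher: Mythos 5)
The paper does not actually supply a proof of Lemma~\ref{lem:tools}: it is introduced as ``a few simple facts on convex sets'' and stated without argument, so there is no in-paper proof to compare against. Your proof is the standard one and is essentially correct. Part (1) is the bipolar theorem proved the usual way (easy inclusion plus separation for the converse), and you are right to flag that closedness of $C$ is an implicit hypothesis -- without it one only gets $(C^0)^0 = \overline{C}$. Part (2) is handled correctly: you use the cone structure of $C^0$ to force any supporting hyperplane through the origin (plugging in $\thb = \zero$ gives $c \ge 0$, scaling gives $\langle\thb,\xv\rangle \le 0$, and the requirement that a supporting hyperplane actually touch the set then forces $c=0$), after which the supporting condition literally reads $\xv \in (C^0)^0 = C$; the converse is immediate since $\zero \in C^0$ lies on the candidate hyperplane, and uniqueness up to a positive scalar is standard. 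One minor point you could make explicit: the correspondence in (2) should be read as being with \emph{nonzero} $\xv \in C$, and your phrase ``the sign of the scalar is pinned down by requiring $C^0$ to lie on the chosen side'' is the right way to resolve the $\pm\xv$ ambiguity. No gaps of substance.
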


%
%
% We proceed to prove a few elementary geometric lemmas about convex sets in Hilbert space. We make use of the following basic facts:
% \begin{fact}
% Let $\K \subseteq \H$ be a convex set and $\x \notin \K$. Let $\pi(\x)$ be the projection of $\x$ onto $\K$.
% \begin{enumerate}
% \item
% The vector $\xv - \pi(\x)$ is perpendicular to the hyperplane tangent to $\K$ at $\pi(x)$. In particular, if $\K$ is a cone, this hyperplane is exactly $\pi(\x)$, and hence $\langle \x - \pi(x) , \pi(x) \rangle = 0$.
% \item
% For any $y \in \K$, we have that $ (\xv - \pi(\xv)) \cdot (\yv - \pi(\xv)) \leq 0$.
% \end{enumerate}
% \end{fact}

The distance to a cone can conveniently be measure via a ``dual formulation,'' as we now show.
\begin{lem} \label{lem:distance-cone}
For every convex cone $C$ in Hilbert space
\begin{equation}\label{eq:distbnd}
	\dist(\xv,C)  =  \max_{\thb \in C^0, \|\thb\| \leq 1} \langle\thb , \xv\rangle
\end{equation}
\end{lem}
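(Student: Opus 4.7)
The plan is to prove the two inequalities separately. The easier direction is to show $\dist(\xv, C) \geq \max_{\thb \in C^0,\,\|\thb\|\leq 1}\langle \thb, \xv\rangle$: for any such $\thb$ and any $\zv \in C$, I would write
\[
\langle \thb, \xv\rangle = \langle \thb, \xv-\zv\rangle + \langle \thb, \zv\rangle \leq \|\xv-\zv\| + 0,
\]
using Cauchy--Schwarz together with $\|\thb\|\leq 1$ on the first term, and the defining property of the polar cone $C^0$ on the second. Taking the infimum over $\zv \in C$ yields $\langle \thb, \xv\rangle \leq \dist(\xv, C)$, and then taking the supremum over $\thb$ gives this direction.

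For the reverse (and harder) direction, I would exhibit a witness $\thb^*$ achieving the distance. First note that we may assume $C$ is closed, since $\dist(\xv, C) = \dist(\xv, \overline{C})$ and $\overline{C}$ has the same polar cone as $C$. By the Hilbert-space projection theorem onto a closed convex set, let $\pv \in C$ be the (unique) nearest point to $\xv$, so $\dist(\xv, C) = \|\xv - \pv\|$. If $\xv \in C$, both sides of the identity are zero and we are done; otherwise $\xv\neq\pv$. Set
\[
\thb^* := \frac{\xv - \pv}{\|\xv - \pv\|}.
\]
Clearly $\|\thb^*\| = 1$, and one computes $\langle \thb^*, \xv\rangle = \|\xv - \pv\| + \langle \thb^*, \pv\rangle$, so it remains to show $\thb^* \in C^0$ and $\langle \thb^*, \pv\rangle = 0$.

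Both of these facts will come from combining the variational inequality for the projection, $\langle \xv - \pv,\ \zv - \pv\rangle \leq 0$ for all $\zv \in C$, with the cone structure. Plugging in $\zv = 2\pv \in C$ gives $\langle \xv-\pv, \pv\rangle \leq 0$ and $\zv = 0 \in C$ gives $\langle \xv - \pv, \pv\rangle \geq 0$, so $\langle \xv-\pv, \pv\rangle = 0$. Then for an arbitrary $\zv \in C$, the variational inequality becomes $\langle \xv - \pv, \zv\rangle \leq \langle \xv - \pv, \pv\rangle = 0$, which says exactly that $\xv - \pv \in C^0$, hence $\thb^* \in C^0$. Combining, $\langle \thb^*, \xv\rangle = \|\xv-\pv\| = \dist(\xv, C)$, so the supremum is at least the distance, completing the proof.

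The main conceptual step I expect to need care with is the upgrade from the general convex variational inequality $\langle \xv-\pv, \zv-\pv\rangle \leq 0$ to the stronger cone statement $\langle \xv - \pv, \zv\rangle \leq 0$; this is exactly where the assumption that $C$ is a cone (and not just convex) is used, via the closure of $C$ under nonnegative scaling to put $0$ and $2\pv$ in $C$.
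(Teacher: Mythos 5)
Your proof is correct and takes essentially the same approach as the paper: both rely on the projection $\pv = \pi_C(\xv)$, use Cauchy--Schwarz together with $\langle \thb, \pv\rangle \leq 0$ for the easy inequality, and exhibit the witness $\thb^* = (\xv - \pv)/\|\xv - \pv\|$ for the reverse. The only difference is that you carefully derive the key projection facts ($\xv - \pv \in C^0$ and $\langle \xv - \pv, \pv\rangle = 0$) from the variational inequality plus the cone structure, and handle the edge cases ($\xv \in C$, nonclosed $C$) that the paper asserts without comment.
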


We need to measure distance to $\K$ after we make it into a cone via ``lifting''.
\begin{lem} \label{lem:distance-set}
Consider a convex set $\K \subseteq \H$ in Hilbert space and $\x \notin \K$. Let $\|\K\| := \max_{\yv \in \K} \|\yv\|$. Define $C$ to be the cone generated by the \emph{lifting} of $\K$, that is $C = \cone(\{1\} \oplus \K)$. Then
\begin{equation}
	\dist(1 \oplus \xv,C) \leq \dist(\xv,\K)  \leq    ( 1 + \|\K\|) \cdot \dist(1 \oplus \xv,C)
\end{equation}
\end{lem}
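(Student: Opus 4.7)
The plan is to prove the two inequalities separately, using the very explicit parametrization $C = \{a \cdot (1, \yv) : a \geq 0, \yv \in \K\}$ (which makes sense because $\K$ is convex and each element of $C$ has first coordinate $a \geq 0$, with $\yv = \z/a$ recovered from any nonzero $(a,\z) \in C$).

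For the left inequality $\dist(1\oplus \x, C) \leq \dist(\x,\K)$, I would simply observe that $\{1\} \oplus \K \subseteq C$ by construction (take $a = 1$), and that the map $\yv \mapsto 1 \oplus \yv$ is an isometry onto this slice. Taking the infimum over $\yv \in \K$ of $\|(1,\x) - (1,\yv)\| = \|\x - \yv\|$ is therefore an upper bound on $\dist(1 \oplus \x, C)$, which gives the claim.

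The right inequality is the substantive direction. I would fix an arbitrary element of $C$ of the form $(a, a\yv)$ with $a > 0$ and $\yv \in \K$, set $d := \|(1,\x) - (a,a\yv)\|$, and use the triangle inequality in the form
\[
\|\x - \yv\| \leq \|\x - a\yv\| + |1 - a|\,\|\yv\|.
\]
Each of $\|\x - a\yv\|$ and $|1-a|$ is bounded by $d$ (they are the norms of the two orthogonal components of $(1,\x) - (a, a\yv)$), and $\|\yv\| \leq \|\K\|$. So $\|\x - \yv\| \leq (1 + \|\K\|) d$, which since $\yv \in \K$ yields $\dist(\x,\K) \leq (1 + \|\K\|) d$.

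The only subtlety is that the infimum defining $\dist(1\oplus \x, C)$ might want to be attained at the cone's apex $(0,\zero)$, which is excluded from the $a > 0$ parametrization. I would dispose of this by an $\epsilon$-argument: for every $\epsilon > 0$ there exist $a > 0$ and $\yv \in \K$ with $\|(1,\x) - (a,a\yv)\| \leq \dist(1\oplus \x, C) + \epsilon$ (since any neighborhood of the apex is approached by such points, using that $\K$ is bounded so $a\yv \to \zero$ as $a \to 0^+$). Plugging into the triangle-inequality bound and letting $\epsilon \to 0$ gives the right inequality. The main ``obstacle'' is really just choosing the right parametrization and recognizing that $d$ dominates both orthogonal components, after which the calculation is essentially one line.
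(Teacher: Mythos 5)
Your proof is correct, and for the substantive direction it takes a genuinely different route from the paper. For the first inequality both you and the paper observe $\{1\}\oplus\K \subset C$ and use the isometry $\yv\mapsto 1\oplus\yv$; that part is identical. For the second inequality, the paper's proof is geometric: it writes $\yv = 1\oplus\x$, takes $\w$ and $\uv$ to be the closest points to $\yv$ in $C$ and in (the lift of) $\K$ respectively, works in the plane through the origin, $\uv$, and $\yv$, and invokes a similar-triangles identity
\[
\|\uv\| = \frac{\|\yv-\uv\|}{\|\yv-\w\|} = \frac{\dist(\yv,\{1\}\oplus\K)}{\dist(\yv,C)},
\]
then bounds $\|\uv\| \leq 1+\|\K\|$. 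Your proof instead parametrizes a generic cone point as $(a,a\yv)$ and splits $\x-\yv = (\x-a\yv) + (a-1)\yv$, bounding each piece by the coordinate components of $(1,\x)-(a,a\yv)$. This is more elementary and, in my view, more robust: it avoids the delicate claim that the two relevant perpendiculars produce an exact similar-triangles equality (the paper's chain of equalities is stated without proof and is not obviously correct as an \emph{equality} in high dimensions, though the inequality needed does hold), and your $\epsilon$-argument explicitly handles the possibility that the infimum over $C$ is approached at the apex, a point the paper's proof does not address. One small observation: the $\epsilon$-argument is barely needed once you notice that $\{(a,a\yv): a\ge 0,\ \yv\in\K\}$ already equals all of $C$ including the apex (take $a=0$), and your triangle-inequality bound degenerates gracefully at $a=0$ to $\|\x-\yv\|\le \|\x\|+\|\yv\|\le d + d\|\K\|$ since $d=\|(1,\x)\|\ge\max(1,\|\x\|)$ there; but keeping the $\epsilon$-argument is harmless and arguably cleaner to state.
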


\section{Duality of Approachability and Low-Regret Learning} \label{sec:duality}

We recall the notion of an approachability algorithm $\A$ from Section~\ref{sec:blackwell_approach}. Formally, we imagine $\A$ as a function that observes a sequence of opponent plays $\yv_1, \ldots, \yv_{t-1} \in \Y$ and  chooses $\xv_{t} \leftarrow \A(\yv_{1}, \yv_{2}, \ldots, \yv_{t-1})$ from $\X$, with the goal that $\frac 1 T \sum_{t=1}^T \lb(\xv_t,\yv_t)$ approaches a convex set $S$. The convex decision sets $\X,\Y$, the payoff function $\lb(\cdot,\cdot)$, and the set $S$ are all known in advance to $\A$, and hence we may also write $\A_{\lb, S}$ for the algorithm tuned for these particular choices.

Equivalently, we consider a no-regret algorithm $\L$ as a function that observes a sequence of linear cost functions $\fv_1, \ldots, \fv_{t-1}$ and returns a point $\xv_t \leftarrow \L(\fv_1, \ldots, \fv_{t-1})$ from the decision set $\K$. The goal here is to achieve the regret $\sum_{t=1}^T \langle \fv_t, \xv_t \rangle - \min_{\xv \in \K} \sum_{t=1}^T \langle \fv_t, \xv \rangle$ that is sublinear in $T$. The bounded convex set $\K$ is known to the algorithm in advance, and hence we may write $\A_\K$ for the algorithm tuned for this particular set $\K$.

We now prove two claims, showing the equivalence of Blackwell and Online Linear Optimization. 	Precisely what we will show is the following. Assume we are given (A) an instance of an Online Linear Optimization problem and (B) an algorithm that achieves the goals of Blackwell's Approachability Theorem. Then we shall show that we can convert the algorithm for (B) to achieve a no-regret algorithm for (A).

\begin{algorithm}
	\caption{Reduction of Approachability Alg. $\A$ to Online Linear Optimization Alg. $\L$} \label{alg:bwa_to_lra}
	\begin{algorithmic}
		\STATE Input: Convex decision set $\K \subset \reals^d$
		\STATE Input: Sequence of cost functions $\fv_1, \fv_2, \ldots, \fv_T \in \reals^d$
		\STATE Input: Approachability algorithm $\A$
		\STATE Set: Two-player vector-payoff game $\lb : \K \times \reals^d \to \reals^{d+1}$ as $\lb(\xv,\fv) = \langle\fv, \xv\rangle \oplus -\fv$
		\STATE Set: Approach set $S := \cone(1 \oplus \K)^0$
		\FOR{$t=1, \ldots, T$}
			\STATE Let: $\L(\fv_1, \ldots, \fv_{t-1}) := \A_{\lb,S}(\fv_1, \ldots, \fv_{t-1})$
			\STATE Receive: cost function $\fv_t$
		\ENDFOR
	\end{algorithmic}
\end{algorithm}

\begin{lem}\label{lem:resp_sat_possible}
	For $S$ defined in Algorithm~\ref{alg:bwa_to_lra}, there exists an approachability algorithm $\A$ for $S$; that is, $D_T(\A;S) \to 0$ as $T \to \infty$.
\end{lem}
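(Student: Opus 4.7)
The plan is to invoke Blackwell's Approachability Theorem (Theorem~\ref{thm:blackwell}), which reduces the task to showing that $S := \cone(1 \oplus \K)^0$ is response-satisfiable. Since $\cone(1 \oplus \K)$ is generated by the vectors $1 \oplus \yv$ for $\yv \in \K$, and membership in a polar cone is preserved under positive scaling of the generators, a point $(a, \vv) \in \reals \oplus \reals^d$ lies in $S$ if and only if
\[
    a + \langle \vv, \yv \rangle \leq 0 \quad \text{for every } \yv \in \K.
\]

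Now fix any adversary play $\fv$. The payoff vector is $\lb(\xv, \fv) = \langle \fv, \xv \rangle \oplus (-\fv)$, so the membership condition $\lb(\xv, \fv) \in S$ becomes
\[
    \langle \fv, \xv \rangle - \langle \fv, \yv \rangle \leq 0 \quad \text{for every } \yv \in \K,
\]
i.e., $\xv$ minimizes the linear functional $\langle \fv, \cdot \rangle$ over $\K$. By compactness of $\K$, the minimizer $\xv_\fv := \arg\min_{\yv \in \K} \langle \fv, \yv \rangle$ exists, and by the equivalence above $\lb(\xv_\fv, \fv) \in S$. Hence $S$ is response-satisfiable, and Theorem~\ref{thm:blackwell} supplies an approachability algorithm $\A$ with $D_T(\A; S) \to 0$ as $T \to \infty$.

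There is no substantive obstacle; the argument is a direct unpacking of the polar-cone definition combined with a standard min-over-compact argument. The one point worth emphasizing is the role of the second coordinate $-\fv$ in the lifted payoff: it is precisely this choice that makes the polar-cone inequality reduce to the condition ``$\xv$ is best-in-hindsight against $\fv$.'' This is the key design principle behind Algorithm~\ref{alg:bwa_to_lra}, and foreshadows why the reduction will ultimately yield a no-regret guarantee: Lemma~\ref{lem:distance-cone} will translate $D_T(\A;S) \to 0$ into vanishing average regret via the dual formulation of distance to a cone.
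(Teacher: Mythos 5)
Your proposal is correct and matches the paper's proof almost step for step: both reduce to response-satisfiability via Theorem~\ref{thm:blackwell}, unpack membership in the polar cone $\cone(1 \oplus \K)^0$ by restricting to the generators $1 \oplus \yv$ for $\yv \in \K$, observe that $\lb(\xv,\fv) \in S$ is equivalent to $\langle \fv, \xv\rangle \leq \langle \fv, \yv\rangle$ for all $\yv \in \K$, and finish by taking $\xv = \arg\min_{\yv \in \K}\langle \fv, \yv\rangle$. The only small addition you make is the explicit appeal to compactness of $\K$ to guarantee the minimizer exists, which the paper leaves implicit.
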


% Proof in the appendix

\begin{prop}\label{prop:olo_to_blackwell}
The reduction defined in Algorithm~\ref{alg:bwa_to_lra}, for any input algorithm $\A$, produces an OLO algorithm $\L$ such that $\frac{\regret(\L)}{T} \leq (1 + \|\K\|) {D_T(\A)}$.
\end{prop}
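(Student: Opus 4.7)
The plan is to show directly that $D_T(\A) \geq \regret_T(\L)/(T(1+\|\K\|))$ by unpacking the dual description of the distance to the cone $S$. I would start by letting $\bar{\lb}_T := \tfrac{1}{T}\sum_t \lb(\xv_t,\fv_t) = \bigl(\tfrac{1}{T}\sum_t \langle \fv_t,\xv_t\rangle\bigr) \oplus \bigl(-\tfrac{1}{T}\sum_t \fv_t\bigr)$ be the average payoff vector produced by the reduction. Since $S = \cone(1\oplus\K)^0$ is a closed convex cone, Lemma~\ref{lem:distance-cone} applied with $C = S$ gives
\[
D_T(\A) \ =\ \max_{\thb\in S^0,\ \|\thb\|\leq 1} \langle\thb,\bar{\lb}_T\rangle,
\]
and Lemma~\ref{lem:tools}(1) identifies $S^0 = \cone(1\oplus\K)$, so the maximum ranges over $\thb = \alpha(1\oplus\xv)$ with $\alpha\geq 0$, $\xv\in\K$, and $\alpha\|1\oplus\xv\|\leq 1$.

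Next I would compute $\langle\alpha(1\oplus\xv),\bar{\lb}_T\rangle = \tfrac{\alpha}{T}\bigl(\sum_t\langle\fv_t,\xv_t\rangle - \sum_t\langle\fv_t,\xv\rangle\bigr)$, which is nonnegative-linear in $\alpha$. Its maximum over $0\leq\alpha\leq (1+\|\xv\|^2)^{-1/2}$ is attained at the upper endpoint whenever the parenthesized term is positive; even without enforcing positivity, the choice $\alpha = (1+\|\xv\|^2)^{-1/2}$ yields a valid lower bound on $D_T(\A)$, namely, for every $\xv\in\K$,
\[
D_T(\A) \ \geq\ \frac{1}{T\sqrt{1+\|\xv\|^2}}\Bigl(\sum_{t=1}^T\langle\fv_t,\xv_t\rangle - \sum_{t=1}^T\langle\fv_t,\xv\rangle\Bigr).
\]
I would then instantiate this at $\xv^* \in \arg\min_{\xv\in\K}\sum_t\langle\fv_t,\xv\rangle$ so that the numerator becomes exactly the per-sequence regret of $\L$, and use $\sqrt{1+\|\xv^*\|^2}\leq 1+\|\xv^*\|\leq 1+\|\K\|$ to absorb the normalization into the desired constant. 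Rearranging yields $\regret_T(\L)/T \leq (1+\|\K\|)\,D_T(\A)$.

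The argument will be essentially a one-line calculation once the dual formulation is in place, so the main obstacle is conceptual rather than technical: one must recognize that the reduction's choice of $\lb(\xv,\fv) = \langle\fv,\xv\rangle\oplus(-\fv)$ and $S = \cone(1\oplus\K)^0$ is engineered precisely so that the polar-dual certificate of proximity to $S$---an element of $\cone(1\oplus\K)$---is literally a scaled competitor $\xv\in\K$ lifted with the constant coordinate $1$, and its inner product with $\bar{\lb}_T$ is exactly $1/T$ times the regret against $\xv$. The only technical wrinkle, the $\sqrt{1+\|\xv\|^2}$ normalization from lifting, is accounted for by Lemma~\ref{lem:distance-set} and is what produces the $1+\|\K\|$ blow-up in the final bound.
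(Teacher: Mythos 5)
Your proposal is correct and follows essentially the same route as the paper's proof: apply Lemma~\ref{lem:distance-cone} to write $D_T(\A)$ as a max over $\cone(1\oplus\K)\cap\ball_1$, parameterize such points as normalized lifts $\frac{1\oplus\xv}{\|1\oplus\xv\|}$ with $\xv\in\K$, specialize to the regret-minimizing comparator $\xv^*$, and absorb $\|1\oplus\xv^*\|\le 1+\|\K\|$. One small inaccuracy in your closing commentary: the normalization is handled directly by the bound $\|1\oplus\xv^*\|\le 1+\|\K\|$, not by Lemma~\ref{lem:distance-set} (which is only invoked in the reverse reduction, Proposition~\ref{claim:regret2approachability}).
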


% Proof in the appendix

Now onto the second reduction. The construction in Algorithm~\ref{alg:lra_to_bwa} attempts the following. Assume we are given (A) an instance of a vector-payoff game and an approach set $S$ and (B) a low-regret OLO algorithm. Then we shall show that we can convert the algorithm for (B) to achieve approachability for (A).

\begin{algorithm}
	\caption{Conversion of Online Linear Optimization Alg. $\L$ to Approachability Alg. $\A$} \label{alg:lra_to_bwa}
	\begin{algorithmic}
		\STATE Input: Convex compact decision sets $\X \subset \reals^n$ and $\Y \subset \reals^m$
		\STATE Input: Biaffine vector-payoff function $\lb(\cdot,\cdot) : \X \times \Y \to \reals^d$
		\STATE Input: Approaching Set $S \subset \{1\} \times \reals^{d-1}$
		\STATE Input: Online Linear Optimization algorithm $\L$
		\STATE Set: $\K = \cone(S)^0 \cap B_1$
		\FOR{$t=1, \ldots, T$}
		\STATE Query: $\thb_t \leftarrow \L_{\K}(\fv_1, \ldots, \fv_{t-1})$, where $\fv_s \leftarrow -\lb(\xv_s,\yv_s)$
		\STATE Compute: $\xv_t \in \X$ so that $\langle\thb_t,\lb(\xv_t,\yv)\rangle \leq 0$ for any $\yv \in \Y$ \quad \quad  // \emph{Halfspace oracle}
		\STATE Let: $\A(\yv_1, \ldots, \yv_{t-1}) := \xv_t$
		\STATE Receive: $\yv_t \in \Y$
		\ENDFOR
	\end{algorithmic}
\end{algorithm}

\begin{prop} \label{claim:regret2approachability}
The reduction in Algorithm~\ref{alg:lra_to_bwa} produces an approachability algorithm $\A$ with distance bounded by
$$D_T(\A) \leq (1 + \|S\|)\frac{\regret(\L)}{T}$$
as long as $S$ is halfspace-satisfiable with respect to $\lb(\cdot,\cdot)$.
\end{prop}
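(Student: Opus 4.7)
The plan is to combine halfspace-satisfiability (to justify the halfspace oracle), the no-regret guarantee on $\L$, and the two distance lemmas (Lemmas~\ref{lem:distance-cone} and~\ref{lem:distance-set}). First I would verify that the halfspace oracle is well-defined. At each round $t$, $\L$ returns $\thb_t \in \K = \cone(S)^0 \cap B_1$. By the definition of the polar cone, the halfspace $H_t := \{\zv : \langle \thb_t, \zv \rangle \leq 0\}$ contains $\cone(S)$ and hence contains $S$. Halfspace-satisfiability of $S$ with respect to $\lb$ then supplies some $\xv_t \in \X$ with $\lb(\xv_t, \yv) \in H_t$ for all $\yv \in \Y$, which is exactly what the oracle asks for; in particular $\langle \thb_t, \lb(\xv_t, \yv_t) \rangle \leq 0$ for every $t$.

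Next I would invoke the no-regret guarantee on $\L$ run on $\K$ with cost vectors $\fv_s = -\lb(\xv_s, \yv_s)$. The per-round cost $\langle \fv_t, \thb_t \rangle = -\langle \thb_t, \lb(\xv_t, \yv_t) \rangle$ is nonnegative by the previous step, so the regret inequality yields
\begin{align*}
\regret(\L) &\;\geq\; \sum_{t=1}^T \langle \fv_t, \thb_t \rangle \;-\; \min_{\thb \in \K}\sum_{t=1}^T \langle \fv_t, \thb \rangle \\
            &\;\geq\; 0 \;+\; \max_{\thb \in \K} \sum_{t=1}^T \langle \thb, \lb(\xv_t, \yv_t) \rangle.
\end{align*}
Dividing by $T$ and writing $\bar{\lb}_T := \tfrac{1}{T}\sum_{t=1}^T \lb(\xv_t, \yv_t)$, this becomes $\regret(\L)/T \geq \max_{\thb \in \cone(S)^0,\,\|\thb\|\leq 1} \langle \thb, \bar{\lb}_T \rangle$, which by Lemma~\ref{lem:distance-cone} equals $\dist(\bar{\lb}_T, \cone(S))$.

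Finally I would lift this distance-to-cone bound into a distance-to-$S$ bound via Lemma~\ref{lem:distance-set}. Since $S \subset \{1\} \times \reals^{d-1}$, writing $S = \{1\} \oplus S'$ makes $\cone(S)$ precisely the cone generated by the lifting of $S'$, so the lemma gives $\dist(\bar{\lb}_T, S) \leq (1 + \|S\|)\,\dist(\bar{\lb}_T, \cone(S))$. Chaining with the previous inequality yields $D_T(\A) = \dist(\bar{\lb}_T, S) \leq (1 + \|S\|)\,\regret(\L)/T$, as claimed.

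The main obstacle I anticipate is the bookkeeping around the lifted structure in the last step: one has to be careful that $\bar{\lb}_T$ can be treated as a lifted point $1 \oplus \bar{\lb}_T'$ so that Lemma~\ref{lem:distance-set} applies verbatim (this is where the standing assumption $S \subset \{1\} \times \reals^{d-1}$ does its work, forcing the effective first coordinate to be $1$). Everything else is a mechanical assembly: halfspace-satisfiability $\Rightarrow$ valid oracle $\Rightarrow$ nonnegative per-round cost $\Rightarrow$ the regret inequality becomes a one-sided bound on $\max_{\thb \in \K}\langle \thb, \bar{\lb}_T\rangle$, which Lemma~\ref{lem:distance-cone} identifies with the cone distance.
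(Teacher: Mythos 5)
Your proposal is correct and follows essentially the same route as the paper: verify the halfspace oracle via halfspace-satisfiability and $\thb_t \in \cone(S)^0$, use the regret inequality together with the oracle's sign guarantee to bound $\max_{\thb \in \K}\langle \thb, \bar{\lb}_T\rangle$, identify that quantity with $\dist(\bar{\lb}_T, \cone(S))$ via Lemma~\ref{lem:distance-cone}, and then pass from the cone distance to the set distance via Lemma~\ref{lem:distance-set} using the lifted structure $S = \{1\}\oplus S'$. You also correctly flag the one delicate bookkeeping point---that $\bar{\lb}_T$ must itself be of the form $1 \oplus \zv$---which is precisely the assumption the paper invokes at the analogous step.
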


% Proof in the appendix

\section{Efficient Calibration via Approachability and OLO}

Imagine a sequence of binary outcomes, say `rain' or `shine' on a given day, and imagine a forecaster, say the weatherman, that wants to predict the probability of this outcome on each day. A natural question to ask is, on the days when the weatherman actually predicts ``30\% chance of rain'', does it actually rain (roughly) 30\% of the time? This exactly the problem of \emph{calibrated forecasting} which we now discuss.

There have been a range of definitions of calibration given throughout the literature, some equivalent and some not, but from a computational viewpoint there are significant differences. We thus give a clean definition of calibration, first introduced by Foster~\cite{foster_proof_1999}, which is convenient to asses computationally.

We let $y_1, y_2, \ldots \in \{0,1\}$ be a sequence of outcomes, and $p_1, p_2, \ldots \in [0,1]$ a sequence of probability predictions by a forecaster. We define for every $T$ and every probability interval $[a,b]$, where $0 \leq a \leq b \leq 1$, the quantities
\[
	n_T(p,\epsilon) := \sum_{t=1}^T \I[p_t \in (p-\epsilon/2,p+\epsilon/2)], \quad \quad
	\rho_T(p,\epsilon) := \frac{\sum_{t=1}^T y_t \I[p_t \in (p-\epsilon/2,p+\epsilon/2)]}
	{n_T(p,\epsilon)}.
\]
The quantity $\rho_T(p-\epsilon/2,p+\epsilon/2)$ should be interpreted as the empirical frequency of $y_t = 1$, up to round $T$, on only those rounds where the forecaster's prediction was ``roughly'' equal to $p$. The goal of calibration, of course, is to have this empirical frequency $\rho_T(p,\epsilon)$ be close to the estimated frequency $p$, which leads us to the following definition.

\begin{defn}
Let the $(\ell_1,\epsilon)$-calibration rate for forecaster $\A$ be
	\[
		C_T^{\epsilon}(\A) = \sum_{i=0}^{\lfloor \epsilon^{-1} \rfloor} \frac{n_T(i\epsilon,\epsilon)}{T}\left|  i \epsilon - \rho_T(i\epsilon,\epsilon) \right| - \frac{\varepsilon}{2}
	\]
	We say that a forecaster is \emph{($\ell_1,\epsilon)$-calibrated} if $C_T^{\epsilon}(\A) = o(1)$. This in turn implies $\mathop{\lim\sup}_{T \to \infty} C_T^{\epsilon}(\A) = 0$.
\end{defn}

This definition emphasizes that we can ignore an interval $(p-\epsilon/2,p+\epsilon/2)$ in cases when our forecaster ``rarely'' makes predictions within this interval---more precisely, when we forecast within this interval with a frequency that is sublinear in $T$. Another important feature of this definition is the constant $\epsilon/2$ - which is an artifact of the discretization by $\epsilon$. This is the smallest constant which allows for $\mathop{\lim\sup}_{T \to \infty} C_T^{\epsilon}(\A) = 0$.

We given an equivalent and alternative characterization of this definition: let the {\emph calibration vector} at time $T$ denoted $c_T$ be given by:
$c_T(i) = \frac{n_T(i\epsilon,\epsilon)}{T}\left|  i \epsilon - \rho_T(i\epsilon,\epsilon) \right| $
\begin{claim} \label{claim:caldistance}
The $(\ell_1,\epsilon)$-calibration rate is equal to the distance of the calibration vector to the \Lone ball of radius $\epsilon/2$:
$$C_T^{\epsilon} = \dist(c_T, B_1(\epsilon/2))$$
\end{claim}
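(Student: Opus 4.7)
The plan is to check that both sides reduce to the same expression $\|c_T\|_1 - \epsilon/2$ (clipped at $0$). Since $c_T$ is the quantity that shows up inside the summation of the definition of $C_T^\epsilon$, the left-hand side will require only a re-labeling, while the right-hand side will require a short projection calculation onto the $\ell_1$-ball.

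First I would unpack the left-hand side. By construction each coordinate of $c_T$ is of the form (nonneg.\ mass) $\times$ (absolute value), so $c_T(i) \geq 0$ for every $i$, and consequently $\sum_{i=0}^{\lfloor \epsilon^{-1}\rfloor} c_T(i) = \|c_T\|_1$. Plugging this into the definition of $C_T^\epsilon$ yields the clean identity $C_T^\epsilon(\A) = \|c_T\|_1 - \epsilon/2$.

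Next I would compute $\dist(c_T, B_1(\epsilon/2))$, where the $\ell_1$-ball dictates that the natural distance is also $\ell_1$. For any $u \in B_1(\epsilon/2)$, the reverse triangle inequality gives $\|c_T - u\|_1 \geq \|c_T\|_1 - \|u\|_1 \geq \|c_T\|_1 - \epsilon/2$. When $\|c_T\|_1 > \epsilon/2$ this lower bound is attained by $u^\star := \tfrac{\epsilon/2}{\|c_T\|_1} c_T$, which lies in $B_1(\epsilon/2)$ and for which $\|c_T - u^\star\|_1 = (1 - \tfrac{\epsilon/2}{\|c_T\|_1})\|c_T\|_1 = \|c_T\|_1 - \epsilon/2$. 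When $\|c_T\|_1 \leq \epsilon/2$, the point $c_T$ already lies in $B_1(\epsilon/2)$, so the distance is $0$ and matches the nonpositive value of $C_T^\epsilon$ in the convention that ``calibrated'' means the ball has been reached. Combining the two cases gives $\dist(c_T, B_1(\epsilon/2)) = \max\{0, \|c_T\|_1 - \epsilon/2\}$, which matches $C_T^\epsilon$ exactly in the regime of interest.

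The only real subtlety is the choice of metric: the global convention in the paper is $\ell_2$ distance, but the claim pairs the distance with an $\ell_1$ ball and the arithmetic only matches under the $\ell_1$ interpretation. I would therefore state at the outset that $\dist$ here denotes the $\ell_1$ distance (consistent with the $(\ell_1,\epsilon)$ in the definition of calibration). With that stipulated, the proof is the two-line reduction above; the nonnegativity of $c_T$ is what makes the projection collapse to the trivial radial shrinkage, so there is no real obstacle beyond fixing the metric convention.
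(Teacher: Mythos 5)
Your proposal is correct and takes essentially the same approach as the paper's one-line proof: both reduce the distance to the identity $\dist_1(\xv, B_1(\epsilon/2)) = \|\xv\|_1 - \epsilon/2$. The paper justifies this by observing that the optimal $\yv$ lies in the same orthant as $\xv$, while you use the reverse triangle inequality plus the radial shrinkage $u^\star = \tfrac{\epsilon/2}{\|c_T\|_1} c_T$ as the explicit minimizer — same idea, marginally more explicit. Your two small additions (clipping at $0$ when $\|c_T\|_1 \leq \epsilon/2$, and noting that $\dist$ here must mean the $\ell_1$ metric, deviating from the paper's global $\ell_2$ convention) are genuine clarifications that the paper glosses over, but they do not change the substance of the argument.
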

\begin{proof}
Notice that:
\begin{equation*}
	\dist_1(\xv,B_{1}(\epsilon/2)) := \min_{\yv : \|\yv\|_1 \leq \epsilon/2} \| \xv - \yv \|_1 = -\epsilon/2 + \| \xv \|_1
\end{equation*}
where the second equality follows by noting that an optimally chosen $\yv$ will lie in the same quadrant as $\xv$.
\end{proof}

A standard reduction in the literature (see e.g. \cite{cesa-bianchi_prediction_2006}) shows that $\epsilon$-calibration and full calibration are essentially the same (in the sense
that an $\epsilon$-calibrated algorithm can be converted to a calibrated one). For simplicity we consider only $\epsilon$-calibration henceforth.

\subsection{Existence of Calibrated Forecaster via Blackwell Approachability}

A surprising fact is that it is possible to achieve calibration even when the outcome sequence $\{y_t\}$ is chosen by an adversary, although this requires a randomized strategy of the forecaster. Algorithms for calibrated forecasting under adversarial conditions have been given in Foster and Vohra \cite{foster_asymptotic_1998}, Fudenberg and Levine \cite{fudenberg_easier_1999}, and Hart and Mas-Colell \cite{hart_simple_2000}.

Interestingly, the calibration problem was reduced to Blackwell's Approachability Theorem in a short paper by Foster in 1999 \cite{foster_proof_1999}. Foster's reduction uses Blackwell's original theorem, proving that a given set is halfspace-satisfiable, in particular by providing a construction for each such halfspace. Here, we provide a reduction to Blackwell Approachability using the response-satisfiability condition, i.e. via Theorem~\ref{thm:blackwell}, which is both significantly easier and more intuitive than Blackwell\footnote{A similar existence proof was discovered concurrently by Mannor and Stoltz \cite{mannor2009geometric}}. We also show, using the reduction to Online Linear Optimization from the previous section, how to achieve the most efficient known algorithm for calibration by taking advantage of the Online Gradient Descent algorithm of Zinkevich \cite{zinkevich_online_2003}, using the results of Section~\ref{sec:duality}.

We now describe the construction that allows us to reduce calibration to approachability. For any $\epsilon > 0$ we will show how to construct an $(\ell_1,\epsilon)$-calibrated forecaster. Notice that from here, it is straightforward to produce a well-calibrated forecaster \cite{foster_asymptotic_1998}. For simplicity, assume $\epsilon = 1/m$ for some positive integer $m$. On each round $t$, a forecaster will now randomly predict a probability $p_t \in \{0/m, 1/m, 2/m, \ldots, (m-1)/m, 1\}$, according to the distribution $\wv_t$, that is $\text{Pr}(p_t = i/m) = w_t(i)$. We now define a vector-valued game. Let the player choose $\wv_t \in \X := \Delta_{m+1}$, and the adversary choose $y_t \in \Y := [0,1]$, and the payoff vector will be
\begin{equation}\label{eq:calib_game}
	\lb(\wv_t,y_t) := \left\langle \wv_t(0)\left(y_t - \frac 0 m\right), \wv_t(1)\left(y_t - \frac 1 m\right), \ldots, \wv_t(m)(y_t - 1) \right\rangle
\end{equation}
\begin{lem}
	Consider the vector-valued game described above and let $S$, the $\ell_1$ ball of radius $\epsilon/2$. If we have a strategy for choosing $\wv_t$ that guarantees approachability of $S$, that is $\frac 1 T \sum_{t=1}^T \lb(\wv_t,y_t) \to S$, then a randomized forecaster that selects $p_t$ according to $\wv_t$ is $(\ell_1,\epsilon)$-calibrated with high probability.
\end{lem}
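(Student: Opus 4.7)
The main idea is to identify the calibration vector $c_T$ with the empirical average of a ``realized payoff'' vector that, in conditional expectation, equals $\lb(\wv_t, y_t)$; then use martingale concentration to lift the approachability guarantee about $\lb(\wv_t, y_t)$ to a high-probability statement about $c_T$.

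First, I would define the random vector $\hat{\lb}_t \in \reals^{m+1}$ coordinatewise by $\hat{\lb}_t(i) := \I[p_t = i/m]\,(y_t - i/m)$. Summing over $t$ and invoking the definitions of $n_T$ and $\rho_T$ gives $\sum_{t=1}^T \hat{\lb}_t(i) = n_T(i\epsilon,\epsilon)\bigl(\rho_T(i\epsilon,\epsilon) - i\epsilon\bigr)$, so that $\bigl|\tfrac{1}{T}\sum_t \hat{\lb}_t(i)\bigr| = c_T(i)$ and hence $\|c_T\|_1 = \bigl\|\tfrac{1}{T}\sum_t \hat{\lb}_t\bigr\|_1$.

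Next, since $p_t \sim \wv_t$ given the past, $\E[\hat{\lb}_t(i) \mid \mathcal{F}_{t-1}, y_t] = w_t(i)(y_t - i/m) = \lb(\wv_t, y_t)_i$, so $\hat{\lb}_t - \lb(\wv_t, y_t)$ is a vector martingale difference sequence with each coordinate bounded in $[-1,1]$. Applying Azuma--Hoeffding coordinatewise with a union bound over the $m+1$ coordinates yields, with probability at least $1-\delta$,
\[
  \Bigl\|\tfrac{1}{T}\textstyle\sum_t \hat{\lb}_t - \tfrac{1}{T}\textstyle\sum_t \lb(\wv_t, y_t)\Bigr\|_1 \leq (m+1)\sqrt{\tfrac{2\ln(2(m+1)/\delta)}{T}} = o(1).
\]

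Finally, the approachability hypothesis gives $\dist\bigl(\tfrac{1}{T}\sum_t \lb(\wv_t, y_t), B_1(\epsilon/2)\bigr) \to 0$ in $\ell_2$, which implies the analogous bound in $\ell_1$ up to a factor of $\sqrt{m+1}$; hence $\bigl\|\tfrac{1}{T}\sum_t \lb(\wv_t, y_t)\bigr\|_1 \leq \epsilon/2 + o(1)$ by the triangle inequality. Combining with the concentration bound yields $\|c_T\|_1 \leq \epsilon/2 + o(1)$ with probability at least $1-\delta$, and Claim~\ref{claim:caldistance} then gives $C_T^\epsilon = \|c_T\|_1 - \epsilon/2 = o(1)$, i.e., $(\ell_1,\epsilon)$-calibration with high probability. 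The main obstacle is precisely the bridge between the deterministic quantity controlled by approachability (an average involving the distribution $\wv_t$) and the random one appearing in the calibration rate (involving the realized forecast $p_t$); martingale concentration resolves this, while the remaining work is the algebraic identification of $\|c_T\|_1$ with the realized average and a small norm-conversion factor.
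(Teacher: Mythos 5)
Your proposal is correct and follows the same route the paper sketches: it makes precise the paper's one-line remark that $\tfrac{1}{T}\sum_t \lb(\wv_t,y_t)$ is the (conditional) expectation of the realized calibration increments and that ``standard concentration arguments'' close the gap, by introducing $\hat{\lb}_t(i)=\I[p_t=i/m](y_t-i/m)$, identifying $\|c_T\|_1$ with $\|\tfrac1T\sum_t\hat{\lb}_t\|_1$, and applying Azuma--Hoeffding to the martingale difference $\hat{\lb}_t-\lb(\wv_t,y_t)$. The only nit is that those differences lie in $[-2,2]$ rather than $[-1,1]$, which changes the Azuma constant but nothing substantive, and the $\sqrt{m+1}$ and $m+1$ factors are harmless since $m=1/\epsilon$ is fixed.
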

The proof of this lemma is straightforward, and is similar to the construction in Foster \cite{foster_proof_1999}. The vector $\frac 1 T \sum_{t=1}^T \lb(\wv_t,y_t)$ is simply the expectation of $(\ell_1,\epsilon)$-calibration vector at $T$. Since each $p_t$ is drawn independently, by standard concentration arguments we can see that if $\frac 1 T \sum_{t=1}^T \lb(\wv_t,y_t)$ is close to the \Lone ball of radius $\epsilon/2$, then the $(\ell_1,\epsilon)$-calibration vector is close to the $\epsilon/2$ ball with high probability.

We can now apply Theorem~\ref{thm:blackwell} to prove the existence of a calibrated forecaster.
\begin{thm} \label{thm:calapp}
	For the vector-valued game defined in \eqref{eq:calib_game}, the \Lone\ ball of radius $\epsilon/2$ is response-satisfiable and, hence, approachable.
\end{thm}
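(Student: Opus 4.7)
The plan is to establish response-satisfiability by giving an explicit response rule $\wv_y$ for each adversary move $y\in[0,1]$, and then invoke Theorem~\ref{thm:blackwell} to upgrade response-satisfiability to approachability. Since the payoff vector $\lb(\wv,y)$ has $\ell_1$ norm exactly $\sum_{i=0}^m w(i)\,|y-i/m|$, the task reduces to showing that for every $y$ we can find a distribution on $\{0/m,1/m,\ldots,1\}$ whose expected $\ell_1$-error in ``predicting $y$'' is at most $\epsilon/2 = 1/(2m)$.

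The construction is the natural one: given $y$, let $i^\ast \in \{0,1,\ldots,m-1\}$ be chosen so that $y \in [i^\ast/m,(i^\ast+1)/m]$, and write $y = (1-\alpha)(i^\ast/m) + \alpha\,(i^\ast+1)/m$ with $\alpha \in [0,1]$. Define $\wv_y \in \Delta_{m+1}$ to place mass $1-\alpha$ on coordinate $i^\ast$ and mass $\alpha$ on coordinate $i^\ast + 1$, with zero mass elsewhere. Then $\lb(\wv_y,y)$ is supported on the two coordinates $i^\ast,i^\ast+1$, taking values $(1-\alpha)\cdot(\alpha/m)$ and $\alpha\cdot(-(1-\alpha)/m)$ respectively.

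A direct computation then gives
\[
\|\lb(\wv_y,y)\|_1 \;=\; \frac{2\alpha(1-\alpha)}{m} \;\leq\; \frac{1}{2m} \;=\; \frac{\epsilon}{2},
\]
using the elementary bound $\alpha(1-\alpha)\leq 1/4$ for $\alpha\in[0,1]$. Hence $\lb(\wv_y,y)$ lies in the $\ell_1$ ball $S$ of radius $\epsilon/2$, proving $S$ is response-satisfiable. Since $S$ is closed and convex, Theorem~\ref{thm:blackwell} immediately yields approachability.

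There is no real obstacle here; the only thing to be careful about is matching the discretization constant, namely that the maximum of $\alpha(1-\alpha)$ is $1/4$, which is precisely what produces the tight bound $\epsilon/2$ (and explains why the ``$-\epsilon/2$'' slack in the definition of $C_T^\epsilon$ is unavoidable). One should also note that the response $\wv_y$ depends only on the bracketing interval around $y$, so the map $y \mapsto \wv_y$ is piecewise-linear; this is not needed for the statement, but is reassuring when one later wants to implement the halfspace oracle required by Algorithm~\ref{alg:lra_to_bwa}.
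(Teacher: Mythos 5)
Your proof is correct and follows the same high-level strategy as the paper—exhibit an explicit response $\wv_y$ for each $y$ placing the payoff vector inside $S$, then invoke Theorem~\ref{thm:blackwell}. The only substantive difference is the choice of response. The paper uses the \emph{point mass} on the nearest grid point $i$ (minimizing $|i\epsilon - y|$), so that $\lb(\wv,y)$ has a single nonzero coordinate of magnitude at most $1/(2m)=\epsilon/2$. You instead use the \emph{two-point interpolating} distribution bracketing $y$, yielding two nonzero coordinates summing (in absolute value) to $2\alpha(1-\alpha)/m \le 1/(2m)$. Both hit exactly the same tight constant $\epsilon/2$ (worst case at the midpoint $\alpha=1/2$), so neither is strictly better for the bound; the paper's is marginally simpler to state, while yours has the nice feature you point out—that $y\mapsto\wv_y$ is continuous and piecewise linear—which is not needed for Theorem~\ref{thm:calapp} itself but is a pleasant check when one later builds the halfspace oracle in Algorithm~\ref{alg:oracle} (which in fact also outputs a two-point distribution).
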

\begin{proof}
	To show response-satisfiability, we need only show that, for every strategy $y \in [0,1]$ played by the adversary, there is a strategy $\wv \in \Delta_m$ for which $\lb(\wv,y) \in S$. This can be achieved by simply setting $i$ so as to minimize $|i\epsilon - y|$, which can always be made smaller than $\epsilon/2$. We then choose our distribution $\w \in \Delta_{m+1}$ to be a point mass on $i$, that is we set $w(i) = 1$ and $w(j) = 0$ for all $j \ne i$. Then $\lb(\wv,y)$ is identically 0 everywhere except the $i$th coordinate, which has the value $y - i/m$. By construction, $y - i/m \in [-1/m,1/m]$, and we are done.
\end{proof}

\subsection{Efficient Algorithm for Calibration via Online Linear Optimization}

We now show how the results in the previous Section lead to the first efficient algorithm for calibrated forecasting. The previous theorem provides a natural existence proof for Calibration, but it does not immediately provide us with a simple and efficient algorithm. We proceed according to the reduction outlined in the previous section to prove:

\begin{thm} \label{thm:calib1}
There exists a $(\ell_1,\epsilon)$-calibration algorithm that runs in time $O(\log \frac{1}{\varepsilon})$ per iteration and satisfies $C_T^{\epsilon} = O\left(\frac{1}{\sqrt{\epsilon T}}\right)$
\end{thm}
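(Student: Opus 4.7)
The plan is to execute the two reductions from Section~\ref{sec:duality} end-to-end, taking Zinkevich's Online Gradient Descent (OGD) as the OLO subroutine. Theorem~\ref{thm:calapp} together with Lemma~\ref{lem:halfresp} supplies the halfspace-satisfiability needed to drive Algorithm~\ref{alg:lra_to_bwa}, Proposition~\ref{claim:regret2approachability} controls the approachability distance in terms of $\regret(\L)$, and OGD guarantees $\regret(\L) = O(\sqrt{T})$. The remaining work is to translate an $\ell_2$ distance bound on the expected calibration vector into an $\ell_1$ bound on the realized one, and to implement every round in time $O(\log(1/\epsilon))$.

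Concretely, set $m := \lceil 1/\epsilon \rceil$, take $\lb$ as in \eqref{eq:calib_game}, and lift to $\lb'(\wv,y) := 1 \oplus \lb(\wv,y) \in \reals^{m+2}$ with approach set $S := \{1\} \oplus B_1(\epsilon/2)$. Theorem~\ref{thm:calapp} and Lemma~\ref{lem:halfresp} give halfspace-satisfiability, so Algorithm~\ref{alg:lra_to_bwa} is well defined with $\K := \cone(S)^0 \cap B_1 \subset \reals^{m+2}$. Since $\K$ has diameter at most $2$ and the cost vectors $\fv_t = -\lb'(\wv_t,y_t)$ satisfy $\|\fv_t\|_2 \leq \sqrt{2}$ (because $\|\lb(\wv_t,y_t)\|_2 \leq 1$), OGD attains $\regret(\L) = O(\sqrt{T})$ with no dependence on $m$. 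Proposition~\ref{claim:regret2approachability} then yields the $\ell_2$ bound $D_T(\A) \leq (1+\|S\|)\regret(\L)/T = O(1/\sqrt{T})$. Combining $\|v\|_1 \leq \sqrt{m+1}\,\|v\|_2$ for $v \in \reals^{m+1}$ with Claim~\ref{claim:caldistance}, and using a standard Azuma-type concentration bound for the realized calibration vector $c_T$ around its expectation $\frac{1}{T}\sum_t \lb(\wv_t,y_t)$ (contributing an additional $O(1/\sqrt{T})$), gives $C_T^\epsilon = O(\sqrt{m/T}) = O(1/\sqrt{\epsilon T})$ with high probability.

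The subtle part is the per-iteration complexity. On each round we must perform an OGD step in $\reals^{m+2}$ and invoke the halfspace oracle to recover $\wv_t \in \Delta_{m+1}$ from $\thb_t \in \K$. For the oracle, the condition $\langle \thb_t, \lb'(\wv_t, y)\rangle \leq 0$ is affine in $y \in [0,1]$, so only the endpoints $y=0,1$ matter; applied to the scalar game $\langle \thb_t, \lb'(\cdot,\cdot)\rangle$, Sion's theorem together with the response strategy from the proof of Theorem~\ref{thm:calapp} (round $my$ to the nearest integer) produces a feasible $\wv_t$ supported on at most two adjacent indices, which can be located by binary search over a sorted representation of $\{\theta_{i+1}\}_i$. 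Because the resulting $\wv_t$ is $O(1)$-sparse, each $\fv_t$ is $O(1)$-sparse as well, so the OGD iterates $\thb_t$ can be maintained implicitly as a lazy sum of sparse increments stored in a balanced tree keyed by coordinate, supporting both projection onto $\K = \{\thb : \theta_0 + (\epsilon/2)\|\thb_{1:}\|_\infty \leq 0\} \cap B_1$ and sparse lookups in $O(\log m)$ time. The main obstacle is precisely this data-structural piece: the rate analysis is essentially mechanical once the reductions are in place, but achieving logarithmic per-round cost forces us to avoid ever materializing the full $(m+2)$-dimensional OGD state and to exploit sparsity of both the cost vectors and the oracle's output at every step.
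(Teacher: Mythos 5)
Your rate derivation is correct but follows a genuinely different route from the paper's. You apply Algorithm~\ref{alg:lra_to_bwa} and Proposition~\ref{claim:regret2approachability} verbatim with $\K = \cone(S)^0 \cap B_1$, get the $\ell_2$ bound $D_T(\A) = O(1/\sqrt{T})$ (the dimension-dependence vanishes because $\K \subset B_1$), then pay the $\sqrt{m}$ factor in the conversion $\|\cdot\|_1 \le \sqrt{m+1}\|\cdot\|_2$. The paper instead \emph{modifies} the reduction: it drops the lift, sets $\K = B_\infty(1)$, and uses the dual formulation \eqref{eq:l1distbnd} of the $\ell_1$ distance to $B_1(\epsilon/2)$ together with a relaxed oracle guarantee $\langle\thb,\lb(\wv,y)\rangle\le\epsilon/2$, so the regret bound directly controls the $\ell_1$ distance and the $\sqrt{m}$ shows up as the $\ell_2$-diameter of the cube rather than as a norm-conversion loss. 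Both give $O(1/\sqrt{\epsilon T})$; the two bookkeeping schemes trade the $\sqrt{m}$ factor between the diameter and the norm comparison.

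The real gap is in the per-iteration complexity claim. Your decision set is
\[
  \K \;=\; \cone(S)^0 \cap B_1 \;=\; \bigl\{(\theta_0,\thb') : \theta_0 + (\epsilon/2)\|\thb'\|_\infty \le 0\bigr\}\cap B_1,
\]
and the lifted cost $\fv_t = -1 \oplus -\lb(\wv_t,y_t)$ has a \emph{constant} $-1$ in coordinate $0$. Thus the OGD step pushes $\theta_0$ upward by $\eta$ on every round, monotonically tightening the coupled constraint $\|\thb'\|_\infty \le -2\theta_0/\epsilon$; when that constraint becomes active, the Euclidean projection onto $\K$ generically touches \emph{every} coordinate of $\thb'$ that is near the current $\ell_\infty$ threshold (simultaneous clamping, plus the additional rescaling from the $B_1$ constraint). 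A balanced tree with lazy sparse increments does not obviously support this: it requires knowing the sorted order of $|\theta'_i|$ to determine the clamp level, and the clamp level itself is coupled to $\theta_0$ through a nonsmooth subgradient condition. You have not shown this composite projection can be executed in $O(\log m)$, and I don't think it can be without substantial additional machinery. The paper's choice $\K = B_\infty(1)$ is precisely what eliminates this problem: the projection decouples coordinate-wise, the cost vector $\lb(\wv_t,y_t)$ is $2$-sparse with no always-active coordinate, so each round touches at most two entries and projection is $O(1)$, leaving the oracle's binary search as the only $O(\log(1/\epsilon))$ cost. This is not a cosmetic modification of Proposition~\ref{claim:regret2approachability} but the essential device that makes the algorithm efficient; your direct instantiation of the reduction loses it.
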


The reduction developed in Proposition~\ref{claim:regret2approachability} has some flexibility, and we shall modify it for the purposes of this problem. The objects we shall need, as well as the required conditions, are as follows:
\begin{enumerate}
	\item A convex set $\K$
	\item An efficient learning algorithm $\A$ which, for any sequence $\fv_1, \fv_2, \ldots$, can select a sequence of points $\thb_1, \thb_2, \ldots \in \K$ with the guarantee that $\sum_{t=1}^T \langle \fv_t, \thb_t \rangle - \min_{\thb \in \K} \sum_{t=1}^T \langle \fv_t, \thb \rangle = o(T)$. For the reduction, we shall set $\fv_t \leftarrow -\lb(\wv_t,y_t)$.
	\item An efficient oracle that can select a particular $\wv_t \in \X$ for each $\thb_t \in \K$ with the guarantee that
	\begin{equation} \label{eq:dist_vs_reg}
		\dist\left(\frac 1 T \sum_{t=1}^T \lb(\wv_t,y_t), S \right) \leq \frac 1 T \left( \sum_{t=1}^T \langle -\lb(\wv_t,y_t), \thb_t \rangle - \min_{\thb \in \K} \sum_{t=1}^T \langle -\lb(\wv_t,y_t), \thb \rangle\right)
	\end{equation}
	where the function $\dist()$ can be with respect to any norm.
\end{enumerate}

\paragraph{The Setup}
Let $\K = B_{\infty}(1) =  \{ \thb \in \reals^d : \|\thb\|_\infty \leq 1\}$ be the unit cube. This is an appropriate choice because we can write the \Lone distance to $B_{1}(\epsilon/2)$ (the \Lone ball of radius $\epsilon/2$) as
\begin{equation} \label{eq:l1distbnd}
	\dist_1(\xv,B_{1}(\epsilon/2)) := \min_{\yv : \|\yv\|_1 \leq \epsilon/2} \| \xv - \yv \|_1 = -\epsilon/2 + \| \xv \|_1 =  - \epsilon/2 - \min_{\thb: \|\thb\|_\infty \leq 1}\langle -\xv, \thb \rangle,
	% \min_{\yv : \|\yv\|_1 \leq \epsilon/2} \max_{\thb : \|\thb\|_\infty \leq 1} \langle \yv - \xv, \thb \rangle \leq - \epsilon/2 - \min_{\thb: \|\thb\|_\infty \leq 1}\langle \xv, \thb \rangle.
\end{equation}
where the second equality follows by noting that an optimally chosen $\yv$ will lie in the same quadrant as $\xv$.
Furthermore, we shall construct our oracle mapping $\thb \mapsto \wv$ with the following guarantee: $\langle \lb(\wv,y), \thb \rangle \leq \epsilon/2$ for any $y$. Using this guarantee, and if we plug in $\xv = \frac 1 T \sum_{t=1}^T \lb(\wv_t,y_t)$ \eqref{eq:l1distbnd}, we arrive at:
\begin{eqnarray*}
	\dist_1\left(\frac{\sum_{t=1}^T \lb(\wv_t,y_t)}{T},B_{1}(\epsilon/2)\right) 
	& = & -\epsilon/2 - \min_{\thb: \|\thb\|_\infty \leq 1} \left\langle \frac{-\sum_{t=1}^T \lb(\wv_t,y_t)}{T}, \thb \right\rangle \\
	& \leq & \frac 1 T \left( \sum_{t=1}^T \langle -\lb(\wv_t,y_t), \thb_t \rangle - \min_{\thb \in \K} \sum_{t=1}^T \langle -\lb(\wv_t,y_t), \thb \rangle\right)
\end{eqnarray*}
This is precisely the necessary guarantee \eqref{eq:dist_vs_reg}.

\paragraph{Constructing the Oracle} We now turn our attention to designing the required oracle in an \emph{efficient} manner. In particular, given any $\thb$ with $\|\thb\|_\infty \leq 1$ we must construct $\wv \in \Delta_{m+1}$ so that $\langle \ell(\wv,y), \thb \rangle \leq \epsilon/2$ for any $y$. The details of this oracle are given in Algorithm~\ref{alg:oracle}.
\begin{algorithm}
	\caption{Constructing $\wv$ from $\thb$} \label{alg:oracle}
	\begin{algorithmic}
		\STATE Input: $\thb$ such that $\|\thb\|_\infty \leq 1$
		\IF{$\thb(0) \leq 0$}
		\STATE $\wv \leftarrow \delta_0$ \quad // That is, choose $\wv$ to place all weight on the 0th coordinate
		\ELSIF{$\theta(m) \geq 0$}
		\STATE $\wv \leftarrow \delta_m$ \quad // That is, choose $\wv$ to place all weight on the last coordinate
		\ELSE
		\STATE Binary search $\thb$ to find coordinate $i$ such that $\thb(i) > 0$ and $\thb(i+1) \leq 0$
		\STATE $\wv \leftarrow \frac{\thb(i)^{-1}}{\thb(i)^{-1} - \thb(i+1)^{-1}} \delta_i + \frac{-\thb(i+1)^{-1}}{\thb(i)^{-1} - \thb(i+1)^{-1}} \delta_{i+1}$
		\ENDIF
		\STATE Return $\wv$
	\end{algorithmic}
\end{algorithm}
It is straightforward why, in the final \textbf{else} condition, there must be such a pair of coordinates $i,i+1$ satisfying the condition. We need not be concerned with the case that $\thb(i+1) = 0$, where we can simply define $\frac 0 \infty = 0$ and $\frac \infty \infty = 1$ leading to $\wv \leftarrow \delta_{i+1}$. It is also clear that, with the binary search, this algorithm requires at most $O(\log m) = O(\log 1/\epsilon)$ computation.

In order to prove that this construction is valid we need to check the condition that, for any $y \in \{0,1\}$, $\langle \lb(\wv,y), \thb \rangle \leq \epsilon/2$; or more precisely, $\sum_{i=1}^m \thb(i)\wv(i) \left(y - \frac i m\right) \leq \epsilon/2$.
Recalling that $m = 1/\epsilon$, this is trivially checked for the case when $\thb(1) \leq 0$ or $\thb(m) \geq 0$. Otherwise, we have
\begin{eqnarray*}
	\langle \lb(\wv,y), \thb \rangle
	& = & \thb(i) \frac{\thb(i)^{-1}}{\thb(i)^{-1} - \thb(i+1)^{-1}} \left(y - \frac i m\right) + \thb(i+1)\frac{-\thb(i+1)^{-1}}{\thb(i)^{-1} - \thb(i+1)^{-1}} \left(y - \frac {i+1} m\right)\\
	& = & \frac{1}{\thb(i)^{-1} - \thb(i+1)^{-1}} \frac 1 m
	\; \leq \; \frac{\max(|\thb(i)|, |\thb(i+1)|)} 2 \epsilon \; \leq \; \frac \epsilon 2
\end{eqnarray*}

\paragraph{The Learning Algorithm} The final piece is to construct an efficient learning algorithm which leads to vanishing regret. That is, we need to construct a sequence of $\thb_t$'s in the unit cube (denoted $B_\infty(1)$) so that
\[
	\sum_{t=1}^T \langle \lb_t, \thb_t \rangle - \min_{\thb \in B_\infty(1)} \sum_{t=1}^T \langle \lb_t, \thb \rangle = o(T),
\]
where $\lb_t := \lb(\wv_t,y_t)$. There are a range of possible no-regret algorithms available, but we use the one given by Zinkevich known commonly as Online Gradient Descent \cite{zinkevich_online_2003}. The details are given in Algorithm~\ref{alg:zink}.
\begin{algorithm}
	\caption{Online Gradient Descent} \label{alg:zink}
	\begin{algorithmic}
		\STATE Input: convex set $\K \subset \reals^d$
		\STATE Initialize: $\thb_1 = \mathbf{0}$
		\STATE Set Parameter: $\eta = O(T^{-1/2})$
		\FOR{$t=1, \ldots, T$}
		\STATE Receive $\lb_t$
		\STATE $\thb_{t+1}' \leftarrow \thb_t - \eta \lb_t$ \quad // Gradient Descent Step
		\STATE $\thb_{t+1} \leftarrow \text{Project}_2(\thb_{t+1}',\K)$ \quad // L2 Projection Step
		\ENDFOR
	\end{algorithmic}
\end{algorithm}
This algorithm can indeed be implemented efficiently, requiring only $O(1)$ computation on each round and $O(\min\{m,T\})$ memory. The main advantage is that the vectors $\lb_t$ are generated via our oracle above, and these vectors are \emph{sparse}, having only at most two nonzero coordinates. Hence, the Gradient Descent Step requires only $O(1)$ computation. In addition, the Projection Step can also be performed in an efficient manner. Since we assume that $\thb_t \in B_\infty(1)$, the updated point $\thb_{t+1}'$ can violate at most two of the L$\infty$ constraints of the ball $B_\infty(1)$. An $\ell_2$ projection onto the cube requires simply rounding the violated coordinates into $[-1,1]$. The number of non-zero elements in $\thb$ can increase by at most two every iteration, and storing $\thb$ is the only state that online gradient descent needs to store, hence the algorithm can be implemented with $O(\min\{T,m\})$ memory. We thus arrive at an efficient no-regret algorithm for choosing $\thb_t$.

\begin{proof}[Proof of Theorem \ref{thm:calib1}]
Here we have bounded the distance directly by the regret, using equation \eqref{eq:dist_vs_reg}, which tells us that the calibration rate is bounded by the regret of the online learning algorithm. Online Gradient Descent guarantees the regret to be no more than $DG\sqrt{T}$, where $D$ is the $\ell_2$ diameter of the set, and $G$ is the $\ell_2$-norm of the largest cost vector. For the ball $B_\infty(1)$, the diameter $D = \sqrt{\frac{1}{\epsilon}}$, and we can bound the norm of our loss vectors by $G = \sqrt 2$. Hence:
\begin{equation}
C_T^{\epsilon} \quad = \quad  \dist(c_T, B_1(\epsilon/2))
\quad \leq \quad \frac{\regret_T}{T}
\quad \leq \quad \frac{G D}{\sqrt{T}}
\quad = \quad  O\left(\frac{1}{\sqrt{\epsilon T}}\right)
\end{equation}
\end{proof}

\ignore{
\subsubsection{Lower bounds on calibration}

We show that the calibration rate attained by our algorithm above is essentially optimal. 
\begin{lem}
Any calibration algorithm has $(\ell_1,\epsilon)$-calibration rate of $\Omega(\frac{1}{\sqrt{T \epsilon}})$.
\end{lem}
\begin{proof}
As shown in Claim \ref{claim:caldistance}, the calibration rate is exactly equal to the approachability distance of the calibration vector to the \Lone ball of radius $\epsilon/2$. It thus suffices to lower bound the approachability distance to the \Lone ball. 

To do so, we use the known regret lower bounds in the literature. Recall reduction \ref{alg:bwa_to_lra} which produced an online linear optimization algorithm whose regret was bounded in Proposition \ref{prop:olo_to_blackwell} by the distance of the corresponding approachability algorithm over the dual cone, multiplied by one plus the diameter of the cone. In our case, the dual cone is the unit cube of dimension $d = \frac{1}{\epsilon}$, and it is convenient to measure its diameter according to the $\ell_{\infty}$ norm, which is one.

It is known that the {\it average} regret of any OLO algorithm over the cube of dimension $d$, with gradients that are bounded by $O(1)$, is lower bounded by $\Omega(\sqrt{d/T})$ (see e.g. \cite{cesa-bianchi_prediction_2006}). We thus conclude that the distance of the approachability algorithm, and hence the calibration rate of any calibration algorithm, is lower bounded by the same quantity, $\Omega(\frac{1}{\sqrt{T \epsilon}})$..
\end{proof}
}

\bibliographystyle{plain}
\bibliography{blackwell}

\appendix

\section{Proofs}

\begin{proof}[Proof of Lemma~\ref{lem:distance-cone}]
	We need two simple observations. Define $\pi_C(\xv)$ as the projection of $\xv$ onto $C$. Then clearly, for any $\xv$,
	\begin{align}
		& \dist(\xv,C) = \| \xv - \pi_C(\xv) \| \label{eq:disproj} \\
		& \langle \xv - \pi_C(\x), \yv \rangle \leq 0 \; \forall \yv \in C \text{ and hence } \xv - \pi_C(\x) \in C^0 \label{eq:projdual} \\
		\label{eq:projperp} & \langle \xv - \pi_C(\xv), \pi_C(\xv) \rangle = 0
	\end{align}
	Given any $\thb \in C^0$ with $\|\thb\| \leq 1$, since $\pi_C(\xv) \in C$ we have that
	\[
		\langle \thb, \xv \rangle \leq \langle \thb, \xv - \pi_C(\xv) \rangle \leq \|\thb \| \|\xv - \pi_C(\xv) \| \leq \|\xv - \pi_C(\xv) \|,
	\]
	which immediately implies that $\max_{\thb \in C^0, \|\thb\| \leq 1} \langle\thb , \xv\rangle \leq \dist(\xv,C)$. Furthermore, by selecting $\thb = \frac{\xv - \pi_C(\xv)}{\|\xv - \pi_C(\xv)\|}$ which has norm one and, by \eqref{eq:disproj}, is in $C^0$, we see that
	\[
		\max_{\thb \in C^0, \|\thb\| \leq 1} \langle\thb , \xv\rangle \geq \left\langle \frac{\xv - \pi_C(\xv)}{\|\xv - \pi_C(\xv)\|}, \xv \right \rangle =
		\left\langle \frac{\xv - \pi_C(\xv)}{\|\xv - \pi_C(\xv)\|}, \xv - \pi_C(\xv) \right \rangle
		= \| \xv - \pi_C(\xv)\|,
	\]
	which implies that $\max_{\thb \in C^0, \|\thb\| \leq 1} \langle\thb , \xv\rangle \geq \dist(\xv,C)$ and hence we are done.
% Recall that by the definition of distance and projections we have
% $$ \dist(\xv,C) = \min_{\yv \in C} \|\xv - \yv\| = \|\xv - \pi(\xv) \| $$
% \begin{enumerate}
% \item[$\Leftarrow$]
% Recall that by the definition of convexity we have that for any point $\xv \notin C $ and $\yv \in C$ we have that
% $$\langle \xv - \pi(\xv) , \yv \rangle \leq 0 $$
% By definition of the polar cone $C^0$ this implies $\xv - \pi(\xv) \in C^0$. Therefore,
% \begin{eqnarray*}
% \max_{\zv \in C^0, \|\zv\| \leq 1} \langle\zv \cdot \xv \rangle & \geq &  \langle \frac{\xv - \pi(\xv)} {\|\xv - \pi(\xv)\| }, \xv \rangle \\
% & = &  \langle \frac{\xv - \pi(\xv)} {\|\xv - \pi(\xv)\| }, \xv - \pi(\xv) \rangle \\
% & = &  \|\xv - \pi(\xv)\|
% \end{eqnarray*}
% Where the second equality follows since $C$ is a convex cone and hence for every point $\xv$ we have $\langle \xv - \pi(\xv) , \pi(\xv) \rangle = 0$.
%
% \item[$\Rightarrow$]
% Notice that $\pi(\xv) \in C$, and hence $\langle \zv , \pi(\xv) \rangle < 0$ for all $\zv \in C^0$. Hence
% \begin{eqnarray*}
% \forall_{\zv \in C^0, \|\zv\| \leq 1} \ . \ \langle\zv \cdot \xv \rangle & \leq &  \langle \zv , \xv - \pi(\xv) \rangle \\
% & \le &  \langle \frac{\xv - \pi(\xv)} {\|\xv - \pi(\xv)\| }, \xv - \pi(\xv) \rangle \\
% & = &  \|\xv - \pi(\xv)\|
% \end{eqnarray*}
% Here the second inequality is by the fact that $\zv$ is in the unit ball.
%
% \end{enumerate}
\end{proof}

\begin{proof}[Proof of Lemma~\ref{lem:distance-set}]
Since $\dist(1 \oplus \xv, \{1\} \times \K) = \dist(\x,\K)$ and $\{1\} \oplus \K \subset C$, the first inequality follows immediately.

For the second inequality, let $ \yv = 1 \oplus \x $ and let $\w,\uv$ be the closest points to $\yv$ in $C$ and $\K$ respectively. Consider the plane determined by these three points, as depicted in figure \ref{fig:cone}.
 \begin{figure}[h!]
 % ELAD TEMP
 \begin{center}
 \includegraphics[width=4.0in]{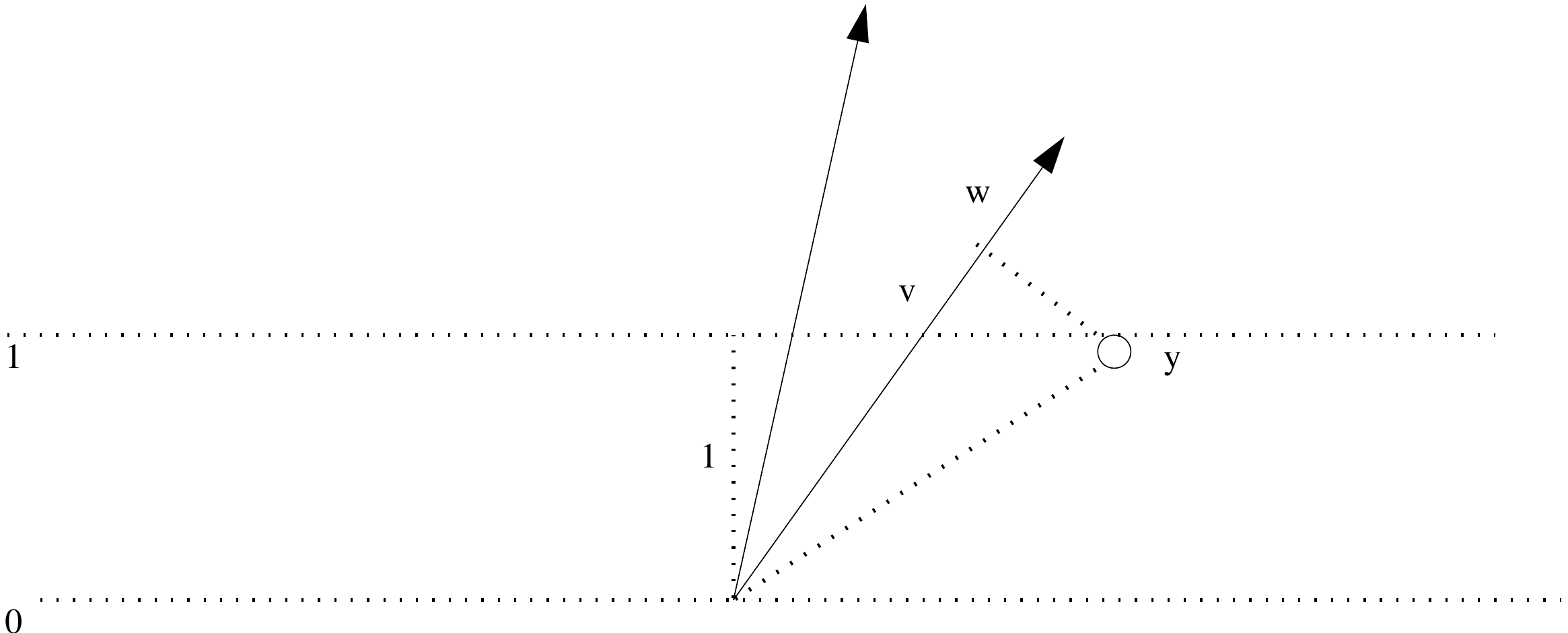}
 \end{center}
 \caption{The ratio of distances to $\K$ and the cone is the same as the ratio between $\|\uv\|$ and one. \label{fig:cone}}
 \end{figure}
Notice that, by triangle similarity, we have that
\[
	\|\uv\| = \frac{\|\uv\|}{\| 1 \oplus \zero \|} = \frac{\|\yv - \uv\|}{\|\yv - \wv\|} =  \frac{\dist(\yv,\{1\} \oplus \K)}{\dist(\yv,C)}
\]
Of course, $\uv \in \K$ and hence $\|\uv\| \leq \|\{1\} \oplus \K\| \leq 1 + \|\K\| $. The result follows immediately.
\end{proof}

\begin{proof}[Proof of Lemma~\ref{lem:resp_sat_possible}]
	The existence of an approachability algorithm is established by Blackwell's Approachability Theorem (Theorem~\ref{thm:blackwell}), as long as we can guaranteed the response-satisfiability condition. Precisely, we must show that, for any $\fv$, there is some $\xv_\fv \in \K$ such that $\lb(\xv_\fv,\fv) = \langle\fv, \xv_\fv \rangle \oplus -\fv \in \cone(1 \oplus \K)^0$.
	Recall that $\theta \in \cone(1 \oplus \K)^0$ if and only if $\langle \theta, \zv \rangle \leq 0$ for every $\zv \in \cone(1 \oplus \K)$. Observe that it suffices to restrict to only the set generating the cone, that is $\theta \in \cone(1 \oplus \K)^0$ if and only if $\langle 1 \oplus \xv', \zv \rangle \leq 0$ for each $\xv' \in \K$. Hence,
	\begin{eqnarray*}
		\lb(\xv,\fv) \in S & \Longleftrightarrow &  \langle \langle\fv, \xv\rangle \oplus -\fv, \zv \rangle \leq 0 \quad \forall \zv \in \cone(1 \oplus \K) \\
		& \Longleftrightarrow &  \langle \langle\fv, \xv\rangle  \oplus -\fv,  1 \oplus \xv' \rangle \leq 0 \quad \forall \xv' \in \K \\
		& \Longleftrightarrow &  \langle\fv, \xv\rangle \leq \langle\fv, \xv'\rangle  \quad \forall \xv' \in \K
	\end{eqnarray*}
	Of course, this can be achieved by setting $\xv = \arg\min_{\xv \in \K} \langle \fv, \xv \rangle$, and hence we are done.
\end{proof}

\begin{proof}[Proof of Proposition~\ref{claim:regret2approachability}]
	First notice that we require the halfspace-satisfiability condition for $S$ to ensure that the ``halfspace oracle'' in Algorithm~\ref{alg:lra_to_bwa} exists. Because we are selecting $\thb_t$ in $\K \subset \cone(S)^0$, $\thb_t$ defines a halfspace containing $S$ and hence we can use our halfspace oracle to find an $\xv_t$ satisying $\langle \thb_t,  \lb(\xv_t,\yv)\rangle \leq 0$ for every $\yv \in \Y$.
	
	To bound $D_T(\A)$, which is the distance between the point $\frac 1 T  \sum_{t=1}^T \lb(\xv_t, \yv_t)$ and the set $S$, we begin by instead bounding the distance to $\cone(S)$. We can immediately apply Lemma~\ref{lem:distance-cone} to obtain
	\begin{eqnarray}
		\dist\left( \frac 1 T  \sum_{t=1}^T \lb(\xv_t, \yv_t), \cone(S) \right)
		& = & \max_{\thb \in \K} \left\langle \frac 1 T  \sum_{t=1}^T \lb(\xv_t, \yv_t), \thb \right\rangle = \frac 1 T \max_{\thb \in \K}\left( -  \sum_{t=1}^T \langle \fv_t, \thb \rangle \right) \nonumber \\
		& \leq & \frac 1 T \left( \sum_{t=1}^T \langle \fv_t, \thb_t \rangle  - \min_{\thb \in \K} \sum_{t=1}^T \langle \fv_t, \thb \rangle \right)
		= \frac 1 T \text{Regret}_T(\A) \label{eq:dist_lt_regret}
	\end{eqnarray}
	where the first inequality follows by the halfspace oracle guarantee. Of course, if we let $S' \subset \reals^{d-1}$ be the set $S$ after removing the first coordinate, then we see by Lemma~\ref{lem:distance-set} that for any $\zv \in \reals^{d-1}$,
	\begin{equation}\label{eq:bound_on_dist}
		\dist(1\oplus \zv,S) = \dist(\zv,S') \leq (1 + \|S'\|) \dist(1 \oplus \zv,\cone(1 \oplus S')) \leq (1 + \|S \|) \dist(1 \oplus \zv,\cone(S)).
	\end{equation}
	By assumption, however, we can write  $\frac 1 T  \sum_{t=1}^T \lb(\xv_t, \yv_t) = 1 \oplus \zv$ for some $\zv \in \reals^{d-1}$. Combining this with equations \eqref{eq:dist_lt_regret} and \eqref{eq:bound_on_dist} finishes the proof.
\end{proof}

\begin{proof}[Proof of Proposition~\ref{prop:olo_to_blackwell}]%

	Applying Lemma~\ref{lem:distance-cone} to the definition of $D_T(\A)$ gives
	\begin{equation}
		D_T(\A) \equiv \dist\left(\frac 1 T \sum_{t=1}^T \ell(\xv_t, \fv_t), S\right) =
		 \max_{\w \in \cone(1 \oplus \K) \ , \ \|\w\| \leq 1 } \left \langle \frac 1 T \sum_{t=1}^T \ell(\xv_t, \fv_t), \w\right \rangle
	\end{equation}
	Notice that, in this optimization, we can assume w.l.o.g. that $\|\w\| = 1$, or $\w = \zero$. In the former case we can write $\w = \frac{1 \oplus \xv}{\|1 \oplus \xv\|}$ for some $\xv \in \K$, and we drop the latter case to obtain the inequality
	\begin{eqnarray*}
		D_T(\A)
		\geq \max_{\xv \in \K} \: \left \langle \frac 1 T \sum_{t=1}^T \ell(\xv_t, \fv_t), \frac{ 1 \oplus \xv }{\|1 \oplus \xv \|}  \right \rangle
		& = &  \frac 1 T  \max_{\xv \in \K}\frac {\left(\sum_{t=1}^T \langle\fv_t, \xv_t\rangle - \sum_{t=1}^T \langle\fv_t, \xv\rangle \right)} { \| 1 \oplus \xv\| } \\
		& \geq & \frac {  \frac 1 T\left(\sum_{t=1}^T \langle\fv_t, \xv_t\rangle - \sum_{t=1}^T \langle\fv_t, \xv^* \rangle \right)} {  \| 1 \oplus \xv^*\| } \geq  \frac{\frac 1 T\regret_T(\A)}{1 + \|\K\| },
	\end{eqnarray*}
	where we set $\xv^* := \arg\min_{\xv \in \K} \sum_{t=1}^T \langle\fv_t, \xv \rangle$.
\end{proof}

\section{A generalization of Blackwell to convex functions}

Consider the following generalization of Blackwell to functions. In analogy to Blackwell, let:
\begin{enumerate}
\item
A two-player game with functions as payoffs. For strategies $i,j$ we have that the payoff of the game $l(i,j) \in S$ is a function (rather than a vector as in Blackwell).
\item
$S$ -  set of functions $S: \{ f:\reals^d \mapsto \reals\} \subseteq \mathcal{F} $.
\item
"Halfspaces", which are characterized by $x \in \reals^d$. The halfspace $H_x$ contains all functions $f$ such that $f(x) \leq 0$.
\item
An oracle $O : H_x \mapsto p$ which maps a halfspace containing $S$, i.e. $\forall f \in S , f(x) \leq 0$, into a distribution over player strategies, such that the resulting loss function is contained inside the halfspace, i.e.
$$ \forall j \ , \ l(O(x), j) =  l(p,j) \in H_x$$
\item
When talking of approachability we need a distance measure. If we think of $f(x)$ as the "inner-product" between $f$ and $x$, then $K = \{ x | f(x) \leq 0  \ \ \forall f \in S \}$, which is the "dual" set to $S$, is the set of all hyperplanes containing $S$. Our distance measure between a function $f$ and $S$ is then taken to be the maximal inner product with any hyper-plane containing $S$:
$$ d(f,S) \equiv \max\{ 0, \max_{x \in K} f(x)\} $$
Note that this distance is zero for all members of $S$.
\end{enumerate}
Then:
\begin{thm} [Blackwell generalization for functions]
Given an Oracle as above, the set $S$ is approachable.
\end{thm}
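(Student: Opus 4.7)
The plan is to mimic the reduction in Algorithm~\ref{alg:lra_to_bwa} from online convex optimization to approachability, replacing linear cost functionals by evaluation functionals $x \mapsto l_t(x)$. The distance $d(f,S) = \max\{0, \max_{x \in K} f(x)\}$ is exactly a support‐function, which is precisely what the OLO regret of the dual player controls: if we can drive the time‐averaged maximum over $x \in K$ of the cumulative payoff functions down to zero, we are done.

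The reduction I would run is as follows. Instantiate an online learning algorithm $\L$ whose decision set is $K$ (taken convex and compact; see the obstacle discussion below). On round $t$, query $x_t \leftarrow \L(l_1,\dots,l_{t-1}) \in K$. Because $x_t \in K$ the halfspace $H_{x_t}$ satisfies $H_{x_t} \supseteq S$, so the given oracle produces a mixed strategy $p_t := O(x_t)$ with the guarantee
\[
    l(p_t, j)(x_t) \leq 0 \qquad \text{for every opponent play } j.
\]
Play $p_t$, observe $j_t$, and set $l_t := l(p_t, j_t)$. Feed the function $x \mapsto -l_t(x)$ back to $\L$ as the cost on round $t$. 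The oracle guarantee then yields $\sum_{t=1}^T l_t(x_t) \leq 0$, and the regret guarantee of $\L$ gives, for any $x \in K$,
\[
    \sum_{t=1}^T l_t(x) \;=\; -\sum_{t=1}^T (-l_t)(x) \;\leq\; -\sum_{t=1}^T (-l_t)(x_t) + \mathrm{Regret}_T(\L) \;\leq\; \mathrm{Regret}_T(\L).
\]
Taking the supremum over $x \in K$ and dividing by $T$ yields $d(\bar l_T, S) = \max\{0, \max_{x \in K} \tfrac{1}{T}\sum_t l_t(x)\} \leq \mathrm{Regret}_T(\L)/T$, which is $o(1)$ whenever $\L$ is a no-regret algorithm. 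This is the direct analogue of equation~\eqref{eq:dist_lt_regret}, except that here $\cone(S)^0$ is replaced by $K$ and the linear pairing $\langle \xv,\thb\rangle$ is replaced by the evaluation pairing $l(\cdot)(x)$.

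The main obstacle — and the reason this is a genuine generalization rather than a corollary — lies in the assumptions needed on $\mathcal{F}$ for the learner $\L$ to exist. The vanilla online linear optimization framework needs linearity of $l_t$ in $x$, whereas here $l_t$ may be a general real-valued function. The correct abstraction is to assume that $l(p,j)(\cdot)$ is a convex (equivalently, $-l_t$ is convex) and $L$-Lipschitz function of $x$ on $K$, and that $K$ is a bounded convex set; then $\L$ can be taken to be, say, Online Gradient Descent in the OCO setting, with regret $O(DL\sqrt{T})$. Under these regularity conditions the argument above goes through verbatim, and one obtains an explicit rate $d(\bar l_T, S) = O(DL/\sqrt{T})$.

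I would finally remark that when $\mathcal{F}$ is the set of affine functions $f(x) = \langle v, x\rangle + c$, this theorem specializes exactly to Proposition~\ref{claim:regret2approachability} applied to the lifted cone $\cone(S)$: the dual set $K$ becomes $\cone(S)^0$, the evaluation pairing becomes the inner product, and the ``halfspace oracle'' in Algorithm~\ref{alg:lra_to_bwa} plays the role of the oracle $O$ above. Thus the proposed proof is a clean lift of the OLO‐to‐approachability reduction to the more general functional‐payoff setting, with the only technical cost being the replacement of OLO by a suitable OCO algorithm on $K$.
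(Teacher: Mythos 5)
Your proposal takes essentially the same approach as the paper: run an online convex optimization algorithm on the dual set $K$, use the oracle guarantee $l_t(x_t) \le 0$, and bound $d(\bar l_T, S)$ by the average regret per round. The only slip is the parenthetical ``$l(p,j)(\cdot)$ is convex (equivalently, $-l_t$ is convex)'' --- for $-l_t$ to be a valid convex OCO cost you need $l_t$ concave in $x$, not convex, though the paper is equally informal about which regularity assumptions it is invoking.
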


The Blackwell method of proof is geometric in Nature, and it is not immediately clear how to generalize it to prove the above. However, using Online Convex Optimization, the proof is a simple generalization of the one in the previous sections:

\begin{proof}
Define the dual set to $S$ as
$$ K = \{ x | f(x) \leq 0  \ \ \forall f \in S \} $$
Define the gain function for iteration $t$ - $f_t$ - as $f_t = l(p_t,j_t)$ where $j_t$ is the adversary's strategy at iteration $t$, and $p_t$ is given by the oracle as the mixed user strategy for the hyperplane parameterized by $x_t$.

Hence, iteratively the OCO algorithm generates an $x_t \in K$, which is then fed to the Oracle to obtain $p_t = O(x_t)$, which in turn defines the cost function for this iteration $f_t = l(p_t,j)$. The OCO low-regret theorem guaranties us that
$$ \max_{x^*} \frac{1}{T} \sum_t f_t(x^*) - \frac{1}{T} \sum_t f_t (x_t) \leq \varepsilon_t  \mapsto 0 $$

By the guarantee provided by the oracle, we have that $f_t(x_t) \leq 0$, which combined with the above gives us:
$$ d(\bar{f},S) = \max_{x^*} \bar{f}(x^*)  = \max_{x^*} \frac{1}{T} \sum_t f_t(x^*)  \leq \varepsilon_t  \mapsto 0 $$
Which by our definition implies that the distance of the average gain function to the set $S$ converges to zero.

\end{proof}

\end{document}